\documentclass{article}

\usepackage{fullpage}

\newcommand{\UseBlankLineParagraphs}{%
  \setlength{\parindent}{0pt}%
  \setlength{\parskip}{0.7\baselineskip plus 0.2\baselineskip minus 0.1\baselineskip}%
}
\AtBeginDocument{\UseBlankLineParagraphs}


\usepackage[utf8]{inputenc}
\usepackage[T1]{fontenc}
\usepackage{textcomp}
\usepackage{bm}
\usepackage{dsfont}

\usepackage{graphicx}
\usepackage[svgnames]{xcolor}
\usepackage{tikz}
\usepackage{tikz-3dplot}

\usepackage{booktabs}
\usepackage{multirow}

\usepackage{amsmath}
\usepackage{amssymb}
\usepackage{amsthm}
\usepackage{amsfonts}

\usepackage{mathtools}
\DeclarePairedDelimiter{\prn}{(}{)}
\DeclarePairedDelimiter{\set}{\{}{\}}
\DeclarePairedDelimiterX{\Set}[2]{\{}{\}}{\,{#1}\,:\,{#2}\,}

\DeclarePairedDelimiter{\norm}{\|}{\|}
\DeclarePairedDelimiter{\inpr}{\langle}{\rangle}

\DeclarePairedDelimiter{\brc}{[}{]}
\DeclarePairedDelimiterX{\Brc}[2]{[}{]}{\,{#1}\,\middle|\,{#2}\,}

\DeclareFontFamily{U}{mathx}{}
\DeclareFontShape{U}{mathx}{m}{n}{<-> mathx10}{}
\DeclareSymbolFont{mathx}{U}{mathx}{m}{n}
\DeclareMathAccent{\widecheck}{0}{mathx}{"71}

\usepackage{etoolbox}
\usepackage{ifthen}
\makeatletter
\let\blx@noerroretextools\@empty
\makeatother
\usepackage[date=year,eprint=false,doi=false,isbn=false,backend=biber,giveninits=true,maxcitenames=2,maxbibnames=99,natbib=true,url=false,sorting=ynt,style=apa,citestyle=authoryear-comp,backref=true,uniquelist=false]{biblatex}

\DeclareNameAlias{author}{last-first}
\DeclareSourcemap{
  \maps[datatype=bibtex]{
    \map{
      \step[fieldset=editor, null]
      \step[fieldset=series, null]
      \step[fieldset=language, null]
      \step[fieldset=address, null]
      \step[fieldset=location, null]
      \step[fieldset=month, null]
      \step[fieldset=annote, null]
    }
  }
}
\newrobustcmd{\MakeTitleCase}[1]{%
  \ifthenelse{\ifcurrentfield{title}}%
    {\MakeSentenceCase{#1}}%
    {#1}}
\DeclareFieldFormat[article,inbook,incollection,inproceedings,patent,thesis,unpublished]{titlecase}{\MakeTitleCase{#1}}
\makeatletter
\DeclareFieldFormat[book]{apacase}{\bbx@colon@search\@firstofone{}{#1}}
\makeatother
\DefineBibliographyStrings{english}{%
  backrefpage  = {cited on page},
  backrefpages = {cited on pages},
}
\DefineBibliographyStrings{american}{%
  backrefpage  = {cited on page},
  backrefpages = {cited on pages},
}
\DefineBibliographyStrings{american-apa}{%
  backrefpage  = {cited on page},
  backrefpages = {cited on pages},
}
\addbibresource{references.bib}
\renewbibmacro*{pageref}{%
  \iflistundef{pageref}
    {}
    {\printtext[parens]{%
       \midsentence%
       \ifnumgreater{\value{pageref}}{1}
         {\bibstring{backrefpages}\ppspace}
         {\bibstring{backrefpage}\ppspace}%
       \printlist[pageref][-\value{listtotal}]{pageref}}}}

\usepackage{algorithm}

\usepackage[colorlinks=true,citecolor=Navy,linkcolor=Maroon,urlcolor=Orchid,bookmarksnumbered,hypertexnames=false,pdfdisplaydoctitle,pdfusetitle,unicode]{hyperref}


\usepackage[noend]{algpseudocode}

\algnewcommand{\algorithmicinput}{\textbf{Input:}}
\algnewcommand{\Input}{\item[\algorithmicinput]}
\algnewcommand{\algorithmicoutput}{\textbf{Output:}}
\algnewcommand{\Output}{\item[\algorithmicoutput]}
\algnewcommand{\Break}{\textbf{break}}
\makeatletter
\renewcommand{\ALG@step}{%
  \refstepcounter{ALG@line}%
  \addtocounter{ALG@rem}{1}%
  \ifthenelse{\equal{\arabic{ALG@rem}}{\ALG@numberfreq}}%
    {\setcounter{ALG@rem}{0}\alglinenumber{\arabic{ALG@line}}}%
    {}%
}
\makeatother

\usepackage{url}
\urlstyle{same}

\usepackage{upref}
\usepackage[capitalize,noabbrev]{cleveref}

\crefname{step}{Step}{Steps}
\Crefname{step}{Step}{Steps}
\crefname{appendix}{Appendix}{Appendices}
\Crefname{appendix}{Appendix}{Appendices}

\usepackage{autonum}

\usepackage{nicefrac}       
\usepackage{microtype}      
\usepackage{xspace}
\usepackage{subcaption}
\usepackage{wrapfig}
\usepackage{siunitx}
\usepackage{thm-restate}
\allowdisplaybreaks
\usepackage{enumitem}

\usepackage{pgfplots}
\usepackage{tikz, tikz-3dplot}

\usetikzlibrary{positioning}
\usetikzlibrary{arrows}
\usetikzlibrary{shapes.geometric}

\newtheorem{theorem}{Theorem}[section]
\newtheorem{lemma}[theorem]{Lemma}
\newtheorem{proposition}[theorem]{Proposition}

\newtheorem{assumption}[theorem]{Assumption}
\theoremstyle{definition}
\newtheorem{definition}[theorem]{Definition}

\newcommand{\Z}{\mathbb{Z}}
\newcommand{\R}{\mathbb{R}}

\newcommand{\Rp}{\R_{\ge 0}}

\newcommand{\ones}{\mathbf{1}}

\DeclareMathOperator{\argmin}{arg\,min}
\DeclareMathOperator{\argmax}{arg\,max}
\DeclareMathOperator{\conv}{conv}
\DeclareMathOperator{\dom}{dom}

\newcommand{\E}{\mathop{\mathbb{E}}}

\newcommand{\x}{\bm{x}}

\newcommand{\thb}{\bm{\theta}}

\newcommand{\e}{\mathbf{e}}

\newcommand{\Wcal}{\mathcal{W}}
\newcommand{\Ccal}{\mathcal{C}}

\DeclareMathOperator{\tr}{tr}

\newcommand{\Vcal}{\mathcal{V}}
\newcommand{\Xcal}{\mathcal{X}}
\newcommand{\Ycal}{\mathcal{Y}}
\newcommand{\ind}{\mathds{1}}
\newcommand{\Lcal}{\mathcal{L}}

\newcommand{\U}{\bm{U}}
\newcommand{\V}{\bm{V}}
\newcommand{\W}{\bm{W}}
\newcommand{\G}{\bm{G}}

\newcommand{\rhb}{\bm{\rho}}
\newcommand{\mub}{\bm{\mu}}
\newcommand{\pib}{\bm{\pi}}
\newcommand{\elb}{\bm{\ell}}

\title{Non-Stationary Online Structured Prediction with Surrogate Losses}

\author{%
Shinsaku Sakaue\\
CyberAgent, Tokyo, Japan\\
National Institute of Informatics, Tokyo, Japan\\
Center for Advanced Intelligence Project, RIKEN, Tokyo, Japan\\
\href{mailto:shinsaku.sakaue@gmail.com}{shinsaku.sakaue@gmail.com}
\and
Han Bao\\
The Institute of Statistical Mathematics, Tokyo, Japan\\
Tohoku University, Miyagi, Japan\\
Center for Advanced Intelligence Project, RIKEN, Tokyo, Japan\\
\href{mailto:bao.han@ism.ac.jp}{bao.han@ism.ac.jp}
\and
Yuzhou Cao\\
Nanyang Technological University, Singapore\\
\href{mailto:nanjing.caoyuzhou@gmail.com}{nanjing.caoyuzhou@gmail.com}
}

\date{}

\begin{document}

\maketitle

\begin{abstract}
Online structured prediction, including online classification as a special case, is the task of sequentially predicting labels from input features. In this setting, the \emph{surrogate regret}---the cumulative excess of the actual target loss (e.g., the 0-1 loss) over the surrogate loss (e.g., the logistic loss) incurred by the best fixed estimator---has gained attention because it admits a finite bound independent of the time horizon $T$. However, such guarantees break down in \emph{non-stationary} environments, where every fixed estimator may incur surrogate loss that grows linearly with $T$. To address this limitation, we obtain an upper bound of $F_T\mspace{-1.5mu}+\mspace{-1.5mu}O(1\mspace{-1.1mu}+\mspace{-1.1mu}P_T)$ on the cumulative target loss, where $F_T$ is the cumulative surrogate loss of any comparator sequence and~$P_T$ is its \emph{path length}. This bound depends on~$T$ only through $F_T$ and~$P_T$, thus offering stronger guarantees under non-stationarity. Our core idea is to combine the dynamic regret analysis of online gradient descent (OGD) with the \emph{exploit-the-surrogate-gap} technique. This viewpoint sheds light on the usefulness of a Polyak-style learning rate for OGD, which systematically yields target-loss bounds and performs well empirically. We then extend our approach to broader settings beyond prior work via the \emph{convolutional Fenchel--Young loss}. Finally, a lower bound shows that the dependence on $F_T$ and $P_T$ is tight.\looseness=-1
\end{abstract}

\section{Introduction}\label{sec:introduction}
In supervised learning, the goal is to predict the ground-truth label $y \in \Ycal$ given an input vector $\x \in \Xcal$.
For example, the $K$-class classification problem asks to predict a label from $\Ycal = \set{1, \dots, K}$.
Beyond multiclass classification, structured prediction also covers problems with combinatorial outputs, including multilabel classification and ranking.
Such problems are collectively known as \emph{structured prediction}, which has been extensively studied and applied to various fields, including natural language processing and bioinformatics \citep{BakIr2007-kd}.

We study the sequential version of structured prediction, where a learner and an environment interact sequentially over $T$ rounds.
At each round $t$, the learner observes input $\x_t \in \Xcal$ and makes prediction $\hat y_t \in \hat\Ycal$, where $\hat\Ycal$ is the set of predictions.
The environment then reveals ground-truth $y_t \in \Ycal$, and the learner incurs target loss $\ell(\hat y_t, y_t)$, which measures the discrepancy between $\hat y_t$ and~$y_t$.
The learner's goal is to minimize the cumulative target loss over~$T$ rounds. 
A typical example is online classification, where $\Ycal = \hat\Ycal = \set{1,\dots,K}$ and the target loss is the 0-1 loss.

In many structured prediction problems, the label set $\Ycal$ is discrete, which prevents us from learning direct mappings from $\Xcal$ to $\Ycal$ via efficient continuous optimization methods.
A standard workaround is to use the surrogate loss framework (see, e.g., \citealt{Bartlett2006-gd}).
Specifically, we define an intermediate score space $\R^d$ between $\Xcal$ and $\Ycal$ and a surrogate loss $L(\thb_t, y_t)$,\footnote{While we focus on the Euclidean score space, a general framework on Hilbert spaces is also studied \citep{Ciliberto2016-nl,Ciliberto2020-zr}.\looseness=-1} which quantifies how well the score vector~$\thb_t \in \R^d$ is aligned with ground-truth $y_t \in \Ycal$.
With this framework, we can efficiently learn mappings from $\Xcal$ to $\R^d$ using continuous optimization.
Following prior work \citep{Van_der_Hoeven2020-ug,Van_der_Hoeven2021-wi,Sakaue2024-mu,Shibukawa2025-rf}, we focus on learning linear estimators $\W_t\colon\Xcal \to \R^d$ over $t=1,\dots,T$, where each $\W_t$ computes a score vector as $\thb_t = \W_t\x_t$.
We make a prediction $\hat y_t \in \hat\Ycal$ from the score vector $\thb_t \in \R^d$ via a mapping, called \emph{decoding}, from $\R^d$ to $\hat\Ycal$; this depends on the problem setting as detailed in \cref{sec:decoding-surrogate-gaps,sec:cfyloss-def}.\looseness=-1

In online structured prediction with the surrogate loss framework, a widely used performance measure is the \emph{surrogate regret}, denoted by $R_T$:
\begin{equation}\label{eq:static-surrogate-regret}
  \sum_{t=1}^T \ell(\hat y_t, y_t) = \sum_{t=1}^T L(\U\x_t, y_t) + R_T,
\end{equation}
where $\U$ is the best offline linear estimator minimizing the cumulative surrogate loss.\footnote{
  We write $\sum_{t=1}^T L(\U\x_t, y_t)$ on the right-hand side to clarify that it is not a proper regret, following \Citet{Van_der_Hoeven2021-wi}.\looseness=-1
}
The surrogate regret $R_T$ represents the learner's extra target loss compared to the best offline performance within the surrogate loss framework.
Since \Citet{Van_der_Hoeven2020-ug} explicitly introduced the surrogate regret in online classification, follow-up studies have adopted this measure and extended it to structured prediction \citep{Van_der_Hoeven2021-wi,Sakaue2024-mu,Shibukawa2025-rf}.
Notably, this line of work has achieved finite surrogate regret bounds, which ensure that $R_T$ is upper bounded independently of $T$, under full-information feedback (i.e., $y_t \in \Ycal$ is revealed).
This is an elegant extension of the classical finite mistake bound of the perceptron for linearly separable binary classification \citep{Rosenblatt1958-sh}: under separability, the hinge loss $L(\U\x_t, y_t)$ of a suitable comparator $\U$ is zero (e.g., \citealp[Section~8.2]{orabona2023modern}), so a finite bound on $R_T$ translates into a finite mistake bound.
In the prior literature, such finite bounds have been obtained by combining the standard (static) regret analysis of online convex optimization (OCO) for surrogate losses with the technique of \emph{exploiting the surrogate gap}
\citep{Van_der_Hoeven2020-ug}, which we review in \cref{sec:decoding-surrogate-gaps}.

However, finite surrogate regret bounds fall short in non-stationary environments.
To illustrate with an intuitive example, consider an online binary classification instance that is linearly separable for the first~$T/2$ rounds, but then all labels are flipped for the remaining~$T/2$ rounds.
\Cref{fig:label-flip-example} illustrates this setting.
In this scenario, no fixed offline estimator $\U$ can avoid incurring a cumulative surrogate loss of $\sum_{t=1}^T L(\U\x_t, y_t) = \Omega(T)$.
Consequently, even though $R_T$ is finite, the resulting upper bound on the cumulative target loss $\sum_{t=1}^T \ell(\hat y_t, y_t)$ grows linearly with $T$, making the guarantee meaningless.
Is it then possible to obtain upper bounds on the cumulative target loss that do not grow linearly with $T$ even in non-stationary settings?

\subsection{Our Contribution}\label{subsec:contribution}
We focus on full-information online structured prediction and establish the following bound on the target loss:
for any comparator sequence $\U_1, \dots, \U_T$, it holds that
\begin{equation}\label{eq:small-surrogate-loss-path-length-bound}
  \sum_{t=1}^T \ell(\hat y_t, y_t) = F_T + O(1 + P_T),
\end{equation}
where
\[
  F_T = \sum_{t=1}^T L(\U_t\x_t, y_t),
  \quad
  P_T = \sum_{t=2}^T \norm{\U_t - \U_{t-1}}_\mathrm{F},
\]
and $\norm{ \cdot }_\mathrm{F}$ denotes the Frobenius norm.
In non-stationary environments, this bound can offer significantly stronger guarantees than previous bounds of the form~\eqref{eq:static-surrogate-regret}.
For example, in the above non-stationary binary classification instance illustrated in \cref{fig:label-flip-example}, we can set $\U_t$ to the best offline linear estimator $\U^{(1)}$ for the first $T/2$ rounds and to its negative $\U^{(2)}$ for the latter $T/2$ rounds, which yields $F_T = 0$ for surrogate losses with a separation margin (e.g., the smooth hinge loss) and $P_T = O(1)$, thus achieving $\sum_{t=1}^T \ell(\hat y_t, y_t)\!=\!O(1)$.
This is far better than $\sum_{t=1}^T \ell(\hat y_t, y_t)\!=\!\Omega(T)$ implied by the existing finite bounds on $R_T$ in~\eqref{eq:static-surrogate-regret}.
Regret bounds that depend on $F_T$ are called \emph{small-loss} bounds \citep{Cesa-Bianchi2006-oa,Zhao2024-pi}, and the quantity $P_T$ is called the \emph{path length} \citep{Zinkevich2003-wz}.
Thus, we call a bound of the form~\eqref{eq:small-surrogate-loss-path-length-bound} a ``small-surrogate-loss + path-length'' bound.
Note that our bound~\eqref{eq:small-surrogate-loss-path-length-bound} generalizes the finite bound on $R_T$ in~\eqref{eq:static-surrogate-regret} since setting $\U_1,\dots,\U_T$ to the best offline $\U$ yields $P_T = 0$.\looseness=-1

\begin{figure}[t]
  \centering
  \begin{tikzpicture}[
    x=.92cm,
    y=.92cm,
    >=stealth,
    separator/.style={line width=.65pt, dashed, gray!70},
    blueplus/.style={circle, draw=Navy, fill=Navy!25, line width=.7pt, minimum size=5.8pt, inner sep=0pt},
    blueminus/.style={regular polygon, regular polygon sides=3, draw=Navy, fill=Navy!12, line width=.7pt, minimum size=6.4pt, inner sep=0pt},
    orangeplus/.style={circle, draw=OrangeRed, fill=Orange!20, line width=.7pt, minimum size=5.8pt, inner sep=0pt},
    orangeminus/.style={regular polygon, regular polygon sides=3, draw=OrangeRed, fill=Orange!12, line width=.7pt, minimum size=6.4pt, inner sep=0pt},
    futureplus/.style={circle, draw=Navy!45, fill=white, dash pattern=on .8pt off .5pt, line width=.55pt, minimum size=5.8pt, inner sep=0pt},
    futureminus/.style={regular polygon, regular polygon sides=3, draw=Navy!45, fill=white, dash pattern=on .8pt off .5pt, line width=.55pt, minimum size=6.4pt, inner sep=0pt},
    fixedvector/.style={->, line width=.9pt, Navy!45, dashed},
    orangefixedvector/.style={->, line width=.9pt, OrangeRed!70, dashed},
    vectorlabel/.style={font=\scriptsize}
  ]
    \path[use as bounding box] (-.05,.20) rectangle (8.20,3.25);
    \def\bluepoints{0.56/2.34,0.96/2.72,1.24/2.04,1.58/2.52,1.01/1.48}
    \def\orangepoints{2.18/0.68,2.54/1.28,2.88/0.86,3.20/1.46,3.42/0.56}
    \def\flippedminuspoints{0.42/2.08,0.86/2.18,1.18/2.66,1.66/2.00,2.02/2.42}
    \def\flippedpluspoints{2.06/1.10,2.46/0.44,2.76/1.56,3.18/0.78,3.48/1.20}

    \begin{scope}[shift={(0,0)}]
      \draw[separator] (.35,.38) -- (3.35,2.95);
      \draw[fixedvector] (2.90,2.56) -- (2.54,2.96)
        node[vectorlabel, above left, Navy!55] {$\U^{(1)}$};
      \foreach \px/\py in \flippedminuspoints {
        \node[futureplus] at (\px,\py) {};
      }
      \foreach \px/\py in \flippedpluspoints {
        \node[futureminus] at (\px,\py) {};
      }
      \foreach \px/\py in \bluepoints {
        \node[blueplus] at (\px,\py) {};
      }
      \foreach \px/\py in \orangepoints {
        \node[blueminus] at (\px,\py) {};
      }
    \end{scope}

    \draw[->, line width=1.0pt, black!70] (3.58,1.66) -- (4.22,1.66);

    \begin{scope}[shift={(4.45,0)}]
      \draw[separator] (.35,.38) -- (3.35,2.95);
      \draw[fixedvector] (2.90,2.56) -- (2.54,2.96)
        node[vectorlabel, above left, Navy!55] {$\U^{(1)}$};
      \draw[orangefixedvector] (2.74,2.43) -- (3.10,2.03);
      \node[vectorlabel, OrangeRed, anchor=west] at (3.16,1.99) {$\U^{(2)}$};
      \foreach \px/\py in \bluepoints {
        \node[blueplus] at (\px,\py) {};
      }
      \foreach \px/\py in \orangepoints {
        \node[blueminus] at (\px,\py) {};
      }
      \foreach \px/\py in \flippedminuspoints {
        \node[orangeminus] at (\px,\py) {};
      }
      \foreach \px/\py in \flippedpluspoints {
        \node[orangeplus] at (\px,\py) {};
      }
    \end{scope}
  \end{tikzpicture}
  \caption{A simple non-stationary binary-classification instance.
  Circles and triangles represent labels $+1$ and $-1$, respectively.
  Solid blue markers indicate first-half examples; dashed blue markers in the left panel indicate examples that arrive in the second half, shown with their pre-flip labels.
  The same examples appear as solid orange markers with flipped labels in the right panel.
  The pre-flip labels are separable by $\U^{(1)}$, and the flipped labels are separable by $\U^{(2)}=-\U^{(1)}$, but the two halves taken together cannot be separated by any single fixed comparator.}
  \label{fig:label-flip-example}
\end{figure}

Similar to prior work \citep{Van_der_Hoeven2020-ug,Van_der_Hoeven2021-wi,Sakaue2024-mu,Shibukawa2025-rf}, our approach consists of two parts:
OCO for surrogate losses and decoding of estimated score vectors.
We use online gradient descent (OGD) for the OCO part, while we employ decoding methods of the previous studies.
The key technical step in our analysis is a simple yet non-trivial learning-rate control of OGD.
It combines the dynamic regret analysis of OGD, which yields the OCO regret bound $O(\sqrt{T}(1 + P_T))$ \citep{Zinkevich2003-wz,Hall2015-el}, with the technique of exploiting the surrogate gap (see \cref{sec:surrogate-gap}).
This viewpoint motivates a Polyak-style learning rate, enabling a systematic derivation of target-loss bounds and delivering favorable empirical performance under non-stationarity (see \cref{asec:experiments}).

In \cref{sec:cfyloss}, we tackle a significantly broader range of structured prediction tasks beyond the scope of prior work, including label ranking with the normalized discounted cumulative gain (NDCG) loss, where the label and prediction sets differ (see \cref{asec:ranking-ndcg}).
We achieve this using the \emph{convolutional Fenchel--Young loss}, a recent surrogate loss framework proposed by \citet{cao2025establishing}.
This framework is useful as it provides a principled way to construct desirable smooth surrogate losses based on target-loss structures.
We introduce new technical tools for convolutional Fenchel--Young losses, which, combined with our main strategy in \cref{sec:surrogate-gap}, enable us to establish the bound in~\eqref{eq:small-surrogate-loss-path-length-bound} for more general structured prediction settings.

Finally, \cref{sec:lower-bound} presents a lower bound to show that the dependence on $F_T$ and $P_T$ in \eqref{eq:small-surrogate-loss-path-length-bound} is tight.
This yields an interesting side observation:
in online structured prediction, the classical approach of \citet{Zinkevich2003-wz} already suffices to establish tightness in $F_T$ and $P_T$.
Recent work offers refined dynamic regret bounds in OCO \citep{Zhang2018-qy,Zhao2020-vk,Zhao2024-pi}, but these are not needed for our purpose.\looseness=-1

To summarize, our main contributions are as follows:
\begin{itemize}[itemsep=0pt, leftmargin=.7cm,topsep=0pt]
  \item We establish the ``small-surrogate-loss + path-length'' bound~\eqref{eq:small-surrogate-loss-path-length-bound} by synthesizing the dynamic regret analysis of OGD with the technique of exploiting the surrogate gap. Our analysis also sheds light on a Polyak-style learning rate with favorable empirical performance under non-stationarity (\cref{sec:surrogate-gap}).
  \item We extend our approach to a broader class of structured prediction tasks through convolutional Fenchel--Young losses, enabled by new technical tools (\cref{sec:cfyloss}).
  \item We prove a matching lower bound showing that~\eqref{eq:small-surrogate-loss-path-length-bound} is tight in its dependence on $F_T$ and $P_T$ (\cref{sec:lower-bound}).
\end{itemize}

\subsection{Related Work}\label{sec:related-work}

\textbf{Structured prediction.}\quad
We discuss particularly relevant studies on structured prediction; see, e.g., \citet[Appendix~A]{Sakaue2024-mu} for a comprehensive overview.
Our work is based on an inner-product representation of target losses, as detailed in \cref{sec:problem-setting}.
One such well-known framework is the \emph{structure encoding loss function} \citep{Ciliberto2016-nl,Ciliberto2020-zr}, and we adopt its extensions studied by \citet{Blondel2019-gd} and \citet{cao2025establishing}.
Regarding surrogate losses, our scope mainly covers, but is not limited to, \emph{Fenchel--Young losses}, proposed by \citet{Blondel2020-tu} (see \cref{sec:fenchel-young}).
Also, our approach in \cref{sec:cfyloss} is based on the convolutional Fenchel--Young loss, proposed by \citet{cao2025establishing}.
This surrogate loss framework provides a useful family of smooth surrogate losses such that the surrogate excess risk linearly bounds the target excess risk from above \citep[Theorem~15]{cao2025establishing}, a remarkable property in statistical learning.
Note, however, that in our online setting, ``small-surrogate-loss + path-length'' bounds of the form \eqref{eq:small-surrogate-loss-path-length-bound} cannot be derived merely by using the convolutional Fenchel--Young loss as the surrogate; new technical lemmas introduced in \cref{sec:cfyloss-lemmas} are required.\looseness=-1

\textbf{Online classification and structured prediction.}\quad
Online classification has a broad literature; we refer the reader to an excellent overview by \Citet[Section~1]{Van_der_Hoeven2020-ug}.
Of particular relevance to our work, the classical perceptron \citep{Rosenblatt1958-sh} enjoys a finite mistake bound in the binary case under linear separability.
\Citet{Van_der_Hoeven2020-ug} extended this result to multiclass classification and obtained a finite surrogate regret bound \eqref{eq:static-surrogate-regret}, which holds regardless of separability.
\Citet{Sakaue2024-mu} extended the result to online structured prediction with Fenchel--Young losses.
Prior work on online classification/structured prediction also explored various limited feedback settings \citep{Van_der_Hoeven2020-ug,Van_der_Hoeven2021-wi,Shibukawa2025-rf}, while our work, as the first step toward the non-stationary setting, focuses on full-information feedback.
Addressing non-stationarity is crucial in online structured prediction.
Indeed, \citet{Boudart2024-kj} also studied non-stationary online structured prediction.
This work, however, is not directly comparable to ours: they use a different performance metric (the standard regret in terms of the target loss), and their bounds explicitly involve $T$, unlike our bounds in \eqref{eq:small-surrogate-loss-path-length-bound}.\looseness=-1

\textbf{Dynamic regret bounds.}\quad
The (universal) dynamic regret in OCO compares the learner's choices with any time-varying comparators, thereby serving as a suitable performance measure in non-stationary environments.
The path length~$P_T$, introduced by \citet{Herbster2001-tb}, is a common quantity representing the regularity of the comparator sequence.
\citet{Zhang2018-qy} achieved the optimal dynamic regret for OCO by showing upper and lower bounds with a rate of $\sqrt{T(1 + P_T)}$.
Subsequent studies refined the bound by using data-dependent quantities, including the cumulative loss $F_T$ and the gradient variation \citep{Zhao2020-vk,Zhao2024-pi}.
These are significant in their own right, though the classical approach of \citet{Zinkevich2003-wz} is sufficient for our purpose, as noted in \cref{subsec:contribution}.
%
Regarding the connection to online classification, \Citet{Van_der_Hoeven2021-wi} mentioned the idea of combining their method with OCO algorithms that enjoy dynamic regret bounds.
However, this mention was only made as one example of how their proposed method can be potentially combined with various OCO algorithms, and no resulting bounds on the cumulative target loss were given.\looseness=-1

\textbf{Polyak learning rates.}\quad
Polyak learning rates and their decreasing variants have been used in stochastic optimization \citep{Loizou2021-bi,Orvieto2022-wn,Jiang2023-sr}.
Our Polyak-style learning rate in \cref{sec:polyak} takes a similar form but serves a different purpose: translating the target--surrogate relationship into target-loss guarantees.

\section{Preliminaries}
For $d \in \Z_{>0}$, let $[d]\!=\!\set{1,\dots,d}$.
Let $\triangle^d=\Set{\mub\in\Rp^d}{\norm{\mub}_1=1}$ denote the probability simplex in $\R^d$.
Let $\norm{\W}_\mathrm{F}=\sqrt{\tr(\W^\top\W)}$ denote the Frobenius norm for any matrix~$\W$.
For $\Omega \colon \R^d\to\R\cup\set{+\infty}$, let $\Omega^*(\thb)=\sup\Set*{\inpr{\thb, \mub}-\Omega(\mub)}{\mub \in \R^d}$ be its convex conjugate and let $\dom(\Omega)=\Set*{\mub \in \R^d}{\Omega(\mub)< +\infty}$.

\subsection{Problem Setting}\label{sec:problem-setting}
We present our problem setting.
Examples of problems that fall into our setting include multiclass/multilabel classification and label ranking (see \cref{asec:target-examples,asec:surrogate-examples} for details).
More examples can be found in \citet[Appendix~A]{Blondel2019-gd} and \citet[Section~2.3 and Appendix~C]{Sakaue2024-mu}.

\textbf{Structured prediction.}\quad
Let $\Ycal = [K]$ and $\hat\Ycal = [N]$ represent finite sets of labels and predictions, respectively.
The set sizes, $K$ and $N$, can be extremely large when the sets consist of structured objects.
A target loss $\ell \colon \hat\Ycal \times \Ycal \to \Rp$ measures the discrepancy between a prediction $\hat y \in \hat\Ycal$ and a ground-truth label $y \in \Ycal$.
We can equivalently rewrite $\ell(\hat y, y)$ as $\inpr{\e^y, \elb(\hat y)}$, where $\e^y$ is the $y$th standard basis vector of $\R^K$ and $\elb(\hat y) \in \R^K$ is the loss vector whose $y$th component is $\ell(\hat y, y)$ for~$y \in \Ycal$.
While this representation applies to any target losses, the vector size $K$ may be very large.
Fortunately, more efficient lower-dimensional representations are available for many target losses.
Here, we adopt the following \emph{($\rhb, \elb^{\rhb}$)-decomposition} framework of \citet{cao2025establishing} to represent target losses efficiently.
\begin{definition}\label{def:decomposition}
  The ($\rhb, \elb^{\rhb}$)-decomposition of a target loss $\ell \colon \hat\Ycal \times \Ycal \to \Rp$ is given as follows:
  \begin{equation}\label{eq:decomposition}
    \ell(\hat y, y) = \inpr{\rhb(y), \elb^{\rhb}(\hat y)} + c(y),
  \end{equation}
  where $\rhb \colon \Ycal\to\R^d$ is a label encoding function into the $d$-dimensional Euclidean space, $\elb^{\rhb} \colon \hat\Ycal\to\R^d$ is the corresponding loss encoding function, and $c \colon \Ycal\to\R$ is the term independent of prediction~$\hat y$.
  We also define $\Lcal^{\rhb} \in \R^{d \times N}$ as the matrix whose column corresponding to $\hat y \in \hat\Ycal$ is $\elb^{\rhb}(\hat y)$. 
\end{definition}

For example, in multiclass classification with $K = d$ classes, we let $\rhb(y) = \e^y$, $\elb^{\rhb}(\hat y) = \ones - \e^{\hat y}$, where $\ones$ is the all-ones vector, and $c \equiv 0$ to represent the 0-1 target loss.
In multilabel classification, a label consists of $d$ binary outcomes, hence $K = 2^d$.
Still, the standard Hamming loss enjoys a ($\rhb, \elb^{\rhb}$)-decomposition with the dimensionality of $d\!=\!\log_2 K$; see \cref{asec:target-examples} for details.
Throughout this paper, we suppose that the target loss is represented as in~\eqref{eq:decomposition} with some $\rhb$, $\elb^{\rhb}$, and~$c$.
Note that this entails no loss of generality since $\rhb(y) = \e^y$, $\elb^{\rhb}(\hat y) = \elb(\hat y)$, and $c \equiv 0$ always provide a valid (but possibly inefficient) ($\rhb, \elb^{\rhb}$)-decomposition with $d = K$.\looseness=-1

\textbf{Online learning protocol.}\quad
Let $\Xcal$ be the space of input vectors.
For $t = 1,\dots,T$, the environment picks an input $\x_t \in \Xcal$ and ground-truth $y_t \in \Ycal$.
The learner observes~$\x_t$, makes prediction $\hat y_t \in \hat\Ycal$, and observes feedback $y_t$.
The prediction quality is measured by the target loss $\ell(\hat y_t, y_t)$.
The learner's goal is to minimize the cumulative target loss, $\sum_{t=1}^T \ell(\hat y_t, y_t)$, over $T$ rounds.
The environment may select $(\x_t, y_t)$ adversarially based on information of past rounds. 

\textbf{Surrogate loss framework.}\quad
The surrogate loss framework consists of the space of score vectors~$\R^d$, a surrogate loss function $L \colon \R^d \times \Ycal\!\to\!\Rp$, which is convex in the first argument, and a decoding function that converts a score vector $\thb \in \R^d$ into prediction~$\hat y \in \hat\Ycal$.
The score vectors and the loss-encoding vectors both lie in the same ambient space $\R^d$.
For convenience, let $\pib \colon \R^d \to \triangle^N$ denote a decoding \emph{distribution}, where $\triangle^N$ is the probability simplex over the prediction set~$\hat\Ycal$, allowing for randomized decoding from $\thb \in \R^d$ to~$\hat y \in \hat\Ycal$.
Let $\Wcal$ be a closed convex set of linear estimators.
At the $t$th round, the learner computes $\thb_t = \W_t\x_t$ with the current linear estimator $\W_t \in \Wcal$ and makes a prediction~$\hat y_t$ via sampling from the decoding distribution $\pib(\thb_t)$.
The surrogate loss, $L(\thb_t, y_t)$, quantifies the discrepancy between the estimated score vector $\thb_t = \W_t\x_t$ and the ground-truth label $y_t \in \Ycal$.
For brevity, let $L_t \colon \W \mapsto L(\W\x_t, y_t)$ denote the surrogate loss of $\W \in \Wcal$ at round $t$, which is convex due to the convexity of $\thb \mapsto L(\thb, y)$.
Thus, the learner can use OCO algorithms to learn $\W_t$ for $t = 1,\dots,T$.\looseness=-1

\textbf{Assumptions.}\quad
Below is a list of basic assumptions we make throughout this paper.
\begin{assumption}\label{assump:basic}
  The following conditions hold:
  \begin{enumerate}[itemsep=.1pt, leftmargin=.7cm,topsep=0pt]
    \item $\text{\bf Target loss can take zero.}$ $\min_{\hat y \in \hat\Ycal}\ell(\hat y, y) = 0$ for any ground-truth $y \in \Ycal$.
    \item $\text{\bf Bounded input vectors.}$ $\norm{\x_t}_2 \le 1$ for $t \in [T]$.
    \item $\text{\bf Bounded domain.}$ There exists $D > 0$ such that $\norm{\W - \W'}_\mathrm{F} \le D$ for every $\W, \W' \in \Wcal$.
  \end{enumerate}
\end{assumption}
The first condition holds without loss of generality by adjusting $c(y)$ in \cref{def:decomposition}.
The second and third bounds are for simplicity.

\subsection{Fenchel--Young Loss}\label{sec:fenchel-young}
The \emph{Fenchel--Young loss} framework \Citep{Blondel2020-tu} offers a convenient recipe for designing various surrogate losses through regularization functions.
\begin{definition}
  Let $\Omega \colon \R^d\to\R\cup\set{+\infty}$ be a function with $\dom(\Omega)\!\supseteq\!\conv\prn*{ \Set*{\rhb(y) \in \R^d\!}{\!y \in \Ycal} }$.
  The Fenchel--Young loss $L_\Omega \colon \dom(\Omega^*)\times \Ycal \to \Rp$ generated by $\Omega$ is defined as follows:
  \[
    L_\Omega(\thb, y) = \Omega^*(\thb) + \Omega(\rhb(y)) - \inpr{\thb, \rhb(y)}.
  \]
\end{definition}
Fenchel--Young losses are convex in $\thb$, non-negative, and take zero if and only if $\rhb(y) \in \partial\Omega^*(\thb)$ \citep[Proposition~2]{Blondel2020-tu}.
%
Examples of Fenchel--Young losses include the logistic loss, CRF loss \citep{Lafferty2001}, and SparseMAP loss \citep{Niculae2018-qg}; see \citet[Table~1]{Blondel2020-tu} for more examples.
These surrogate losses are generated by strongly convex regularizers~$\Omega$, ensuring that the loss functions are smooth \citep[Theorem~6]{Kakade2009-ft}.
The class of surrogate losses we consider is not limited to Fenchel--Young losses of the above form.
We can deal with other surrogate losses, such as the smooth hinge loss, as described in \cref{sec:surrogate-gap}.\looseness=-1


\subsection{Dynamic Regret Analysis of OGD}
We recall a standard dynamic regret bound for OGD with non-increasing learning rates, which we use to derive our main results.
This is a known extension of \citet[Theorem~2]{Zinkevich2003-wz}; see, e.g., \citet[Theorem~2]{Hall2015-el}.
We give a proof in \cref{asec:odg-proof} for completeness.
\begin{proposition}\label{prop:dynamic-regret-ogd}
  Compute $\W_1, \dots, \W_T \in \Wcal$ by applying OGD with non-increasing learning rate $\eta_t > 0$ to convex loss functions $L_t\colon\Wcal\to\R$ for $t=1,\dots,T$;
  i.e., set $\W_1$ to an arbitrary point in $\Wcal$ and let
  \[
  \W_{t+1} \gets \argmin \Set*{\norm{\W_t - \eta_t\G_t - \W}_{\mathrm{F}}}{{\W \in \Wcal}}
  \]
  for $t=1,\dots,T$, where $\G_t \in \partial L_t(\W_t)$.
  If the diameter of $\Wcal$ is at most $D$ as in \cref{assump:basic}, then, for any $\U_1,\dots,\U_T \in \Wcal$ and $P_T\!=\!\sum_{t=2}^T \norm{\U_t - \U_{t-1}}_\mathrm{F}$, we have
  \begin{equation}\label{eq:dynamic-regret-ogd}
    \sum_{t=1}^T \prn*{L_t(\W_t)\!- L_t(\U_t)}
    \!\le
    \frac{D}{\eta_T} \prn*{\frac{D}{2} + P_T}
    +
    \sum_{t=1}^T \frac{\eta_t}{2} \norm{\G_{t}}_{\mathrm{F}}^2.
  \end{equation}
\end{proposition}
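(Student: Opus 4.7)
The plan is to follow the standard Zinkevich-style dynamic regret analysis of OGD but take care of the varying step size via an Abel summation. First, by convexity of $L_t$ we have $L_t(\W_t) - L_t(\U_t) \le \inpr{\G_t, \W_t - \U_t}$. Then, using the nonexpansiveness of the Euclidean projection onto $\Wcal$ together with the OGD update $\W_{t+1} = \Pi_\Wcal(\W_t - \eta_t \G_t)$, the usual expansion of $\norm{\W_t - \eta_t \G_t - \U_t}_\mathrm{F}^2$ yields
\[
\inpr{\G_t, \W_t - \U_t} \le \frac{1}{2\eta_t}\prn*{\norm{\W_t - \U_t}_\mathrm{F}^2 - \norm{\W_{t+1} - \U_t}_\mathrm{F}^2} + \frac{\eta_t}{2}\norm{\G_t}_\mathrm{F}^2.
\]

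Second, to let the potentials $\norm{\W_t - \U_t}_\mathrm{F}^2$ telescope across rounds, I rewrite $\norm{\W_{t+1} - \U_t}_\mathrm{F}^2 = \norm{(\W_{t+1}-\U_{t+1}) + (\U_{t+1}-\U_t)}_\mathrm{F}^2$ and drop the positive $\norm{\U_{t+1}-\U_t}_\mathrm{F}^2$ term to get
\[
\norm{\W_{t+1} - \U_t}_\mathrm{F}^2 \ge \norm{\W_{t+1} - \U_{t+1}}_\mathrm{F}^2 - 2 D\,\norm{\U_{t+1} - \U_t}_\mathrm{F},
\]
by Cauchy--Schwarz and the diameter bound $\norm{\W_{t+1}-\U_{t+1}}_\mathrm{F}\le D$. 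This inequality is used for $t=1,\dots,T-1$, while for $t=T$ I simply use $\norm{\W_{T+1}-\U_T}_\mathrm{F}^2\ge 0$.

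Third, the key new step is the Abel summation adapted to non-increasing $\eta_t$. Writing $a_t = \norm{\W_t - \U_t}_\mathrm{F}^2 \le D^2$ and bounding the path-length contribution by pulling out $1/\eta_t \le 1/\eta_T$, I get $\sum_{t=1}^{T-1} \frac{D}{\eta_t}\norm{\U_{t+1}-\U_t}_\mathrm{F} \le \frac{D P_T}{\eta_T}$. Rearranging the remaining sum as
\[
\sum_{t=1}^{T-1} \frac{a_t - a_{t+1}}{2\eta_t} + \frac{a_T}{2\eta_T} = \frac{a_1}{2\eta_1} + \sum_{t=2}^{T} a_t \prn*{\frac{1}{2\eta_t} - \frac{1}{2\eta_{t-1}}},
\]
the non-increasing property of $\eta_t$ makes every coefficient $\frac{1}{2\eta_t} - \frac{1}{2\eta_{t-1}}$ non-negative, so I can upper bound each $a_t$ by $D^2$ and collapse the telescoping sum to $\frac{D^2}{2\eta_T}$. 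Adding the two parts gives $\frac{D}{\eta_T}\prn*{D/2 + P_T}$, which matches \eqref{eq:dynamic-regret-ogd} after adding the $\frac{\eta_t}{2}\norm{\G_t}_\mathrm{F}^2$ terms.

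The only subtlety beyond the constant-step-size proof is step three: one must ensure the Abel rearrangement does not introduce a factor that depends on $1/\eta_1$ rather than $1/\eta_T$, which is exactly where non-increasing monotonicity and the uniform bound $a_t\le D^2$ are essential. Everything else is routine bookkeeping of the standard OGD potential argument.
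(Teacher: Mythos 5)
Your proof is correct and follows essentially the same route as the paper's: both start from the standard one-step OGD potential inequality, insert $\U_{t+1}$ via Cauchy--Schwarz and the diameter bound, pull out $1/\eta_t \le 1/\eta_T$ on the path-length sum, and use Abel summation (the paper calls it ``summation by parts'') together with the non-increasing monotonicity of $\eta_t$ and the uniform bound $a_t \le D^2$ to collapse the potential sum to $D^2/(2\eta_T)$. The only cosmetic difference is that the paper defines $\W_{T+1}=\W_T$, $\U_{T+1}=\U_T$ to treat the $t=T$ term uniformly, whereas you handle it separately by dropping $\norm{\W_{T+1}-\U_T}_\mathrm{F}^2 \ge 0$; the two choices produce the same bound.
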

If $\U_t$ are fixed to the best offline $\U$, then setting $\eta_t \asymp 1/\sqrt{T}$ for $t = 1, \dots, T$ recovers the well-known $O(\sqrt{T})$ regret bound of OGD (regarding $D$ and $\norm{\G_t}_{\mathrm{F}}$ as constants).

\section{``Small-Surrogate-Loss + Path-Length'' Bound via Surrogate Gap}\label{sec:surrogate-gap}
This section considers the setting of prior work \citep{Van_der_Hoeven2020-ug,Van_der_Hoeven2021-wi,Sakaue2024-mu} and describes our main strategy for addressing non-stationarity.
Previous studies obtained finite surrogate regret bounds in \eqref{eq:static-surrogate-regret} with a powerful technique known as \emph{exploiting the surrogate gap}.
To leverage this, we further restrict the class of target losses from \cref{def:decomposition}, which was also assumed in the literature.
\begin{assumption}\label{assump:additional}
  The prediction set equals the label set, i.e., $\hat\Ycal = \Ycal$.
  In addition, there exist $\V \in \R^{d \times d}$, $\bm{b} \in \R^d$, and $\gamma, \nu > 0$ such that the following conditions hold for some norm $\norm{\cdot}$:
  \vspace{-4pt}
  \begin{enumerate}[itemsep=.1pt, leftmargin=.7cm,topsep=.1pt]
    \item $\ell(\hat y, y) = \inpr{\rhb(\hat y), \V \rhb(y) + \bm{b}} + c(y)$ for any $y \in \Ycal$ and $\hat y \in \hat\Ycal$,
    \item $\E_{\hat y \sim \pib}[\ell(\hat y, y)] \le \gamma\norm*{\E_{\hat y \sim \pib} [\rhb(\hat y)] - \rhb(y)}$ for any $\pib \in \triangle^N$ and $y \in \Ycal$, and
    \item $\norm{\rhb(y) - \rhb(y')} \ge \nu$ for any $y, y' \in \Ycal$ with $y \neq y'$.
  \end{enumerate}
\end{assumption}
The first condition means that the target loss is affine decomposable \citep[Section~5]{Blondel2019-gd}.
The second one requires that the expected target loss over a distribution $\pib$ is small when the mean prediction is close to ground-truth~$y$.
The third one means that distinct labels have distant encodings.
These have played essential roles in exploiting the surrogate gap in the previous studies. 
Examples satisfying these conditions include multiclass and multilabel classification (see \cref{asec:target-examples}), and more examples are given in \citet[Section~2.3 and Appendix~C]{Sakaue2024-mu}.
\Cref{sec:cfyloss} addresses more general settings that may not satisfy \cref{assump:additional}.
For example, the label ranking problem (see \cref{asec:ranking-ndcg} for details) has $\hat\Ycal \ne \Ycal$ and hence falls outside \cref{assump:additional}.\looseness=-1

\subsection{Decoding Methods with Surrogate Gaps}\label{sec:decoding-surrogate-gaps}
The key to exploiting the surrogate gap lies in decoding, or how to convert a score vector, $\thb = \W\x$, into a prediction $\hat y \in \hat\Ycal$.
Using existing randomized decoding methods,
we can ensure that the expected target loss of $\hat y$ is strictly smaller than the surrogate loss of~$\thb$.
In light of these randomized decoding methods, we use the following condition throughout this section.
\begin{assumption}[Surrogate gap condition]\label{assump:surrogate-gap}
  Let
  $\ell \colon \smash{\hat\Ycal} \times \Ycal \to \Rp$ be a target loss,
  $L \colon \R^d \times \Ycal \to \Rp$ a surrogate loss, and $\alpha \in (0, 1)$.
  There exists a decoding distribution map $\pib\colon\R^d\to\triangle^N$ such that, for every score vector $\thb \in \R^d$, it holds that
  \begin{equation}\label{eq:surrogate-gap}
      \begin{aligned}
        \E_{\hat y \sim \pib(\thb)}[\ell(\hat y, y)] \le (1 - \alpha)L(\thb, y)
        &&
        \text{for any $y \in \Ycal$.}
      \end{aligned}
  \end{equation}
\end{assumption}
This means that the expected target loss, $\E_{\hat y \sim \pib(\thb)}[\ell(\hat y, y)]$, is smaller than the surrogate loss, $L(\thb, y)$, by the margin $\alpha L(\thb, y)$.
This margin, called the \emph{surrogate gap}, has played a central role in deriving finite surrogate regret bounds in the previous studies.
Below are examples of the existing decoding methods:
\begin{itemize}[itemsep=.1pt, leftmargin=.7cm,topsep=.1pt]
  \item For $K$-class classification with the 0-1 target loss $\ell(\hat y, y) = \ind_{\hat y \neq y}$, which takes one if $\hat y \ne y$ and zero otherwise, if $L$ is
  the smooth hinge loss or logistic loss, randomized decoding methods of \Citet{Van_der_Hoeven2020-ug} and \Citet{Van_der_Hoeven2021-wi} satisfy \cref{assump:surrogate-gap} with $\alpha = 1/K$.
  \item For structured prediction problems with \cref{assump:additional}, if $L$ is a Fenchel--Young loss generated by $\Omega$ that is $\lambda$-strongly convex ($\lambda > \tfrac{4\gamma}{\nu}$) with respect to the norm $\norm{\cdot}$ introduced in \cref{assump:additional},\footnote{
    In \citet{Sakaue2024-mu}, $\Omega$ is additionally assumed to be of Legendre-type to ensure the self-bounding property.
    However, this additional condition is not needed: the self-bounding property follows from the strong convexity of $\Omega$ via the quadratic lower bound for Fenchel--Young losses \citep[Proposition~3]{Blondel2022-uo}.
    }
    then the randomized decoding method of \citet{Sakaue2024-mu} satisfies \cref{assump:surrogate-gap} with $\alpha = \smash{\tfrac{4\gamma}{\lambda\nu}}$.
\end{itemize}
Our analysis below applies whenever \cref{assump:surrogate-gap} holds; \cref{assump:additional} is only a sufficient condition that guarantees the existence of decoding methods with \cref{assump:surrogate-gap}.
Still, among the structured prediction tasks we are aware of, all those admitting such decoding methods also satisfy \cref{assump:additional}.

\subsection{Self-Bounding Surrogate Losses}
We also introduce the following \emph{self-bounding} property of surrogate loss functions.
\begin{assumption}[Self-bounding property]\label{assump:self-bounding}
  Let $L_t\colon \Wcal \ni \W \mapsto L(\W\x_t, y_t)$ be the surrogate loss at round $t \in [T]$.
  There exists $M > 0$ such that, for every $t \in [T]$ and $\W \in \Wcal$, we can take a subgradient $\G_t(\W) \in \partial L_t(\W)$ with
\[
  \norm{\G_t(\W)}_\mathrm{F}^2 \le 2M L_t(\W).
\]

\end{assumption}
This property was also assumed in the line of prior work; 
in particular, \Citet{Van_der_Hoeven2021-wi} has extensively discussed it.\footnote{
  \Citet{Van_der_Hoeven2021-wi} call self-bounding surrogate losses with an additional property \emph{regular} surrogate losses. The terminology of "self-bounding" follows, e.g., \citet{Srebro2010-lp,Zhao2024-pi}.
}
The property holds for every non-negative $M$-smooth surrogate loss \citep[Theorem~4.24]{orabona2023modern}; hence, Fenchel--Young losses generated by $\tfrac{1}{M}$-strongly convex $\Omega$ satisfy it.
Moreover, it holds for a broader class of losses beyond smooth ones.
For example, the smooth hinge loss, viewed as a function of $\W \in \Wcal$, is non-smooth (despite its name); still, it enjoys the self-bounding property, as detailed in \cref{asec:smooth-hinge-loss}.\footnote{
  In \Citet[Appendix~A]{Van_der_Hoeven2021-wi}, the hinge loss is also introduced as a self-bounding surrogate loss.
  However, their hinge loss is discontinuous and hence non-convex.
  We discuss this point in detail in \cref{asec:hinge-loss}.
}\looseness=-1

\subsection{Main Result}
We present our main result based on the surrogate gap.
The following bound has a significant advantage in non-stationary settings compared to the existing finite surrogate regret bounds~\eqref{eq:static-surrogate-regret}.
The proof is short once \cref{prop:dynamic-regret-ogd} is in place, although the bound itself is not obvious a priori.

\begin{theorem}\label{thm:surrogate-gap}
  Let $\alpha \in (0, 1)$ be the surrogate-gap parameter given in \cref{assump:surrogate-gap} and $M > 0$ the self-bounding parameter given in \cref{assump:self-bounding}.
  For $t = 1,\dots,T$, compute $\W_t \in \Wcal$ using OGD with non-increasing learning rate $\eta_t$ such that
  \begin{equation}\label{eq:eta-range}
    \frac{\alpha}{M} \le \eta_t \le \frac{2\prn*{L_t(\W_t) - \E_{\hat y_t \sim \pib(\thb_t)}[\ell(\hat y_t, y_t)]}}{\norm{\G_t}_\mathrm{F}^2},
  \end{equation}
  where $\pib\colon\R^d\to\triangle^N$ is a decoding distribution that satisfies \cref{assump:surrogate-gap} and $\thb_t = \W_t\x_t$.
  The upper bound in \eqref{eq:eta-range} is interpreted as $+\infty$ when $\norm{\G_t}_\mathrm{F}=0$.
  The range of~$\eta_t$ in~\eqref{eq:eta-range} is non-empty, and, for any $\U_1,\dots,\U_T \in \Wcal$, $P_T = \sum_{t=2}^T \norm{\U_t - \U_{t-1}}_\mathrm{F}$, and $F_T = \sum_{t=1}^T L_t(\U_t)$, we have
  \[
    \sum_{t=1}^{T}
    \E_{\hat y_t \sim \pib(\thb_t)}[\ell(\hat y_t, y_t)]
    \le F_T + \frac{MD}{\alpha} \prn*{\frac{D}{2} + P_T}.
  \]
\end{theorem}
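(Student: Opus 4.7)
The plan is to plug the prescribed learning rate into the dynamic regret bound of \cref{prop:dynamic-regret-ogd} and use the surrogate gap condition to cancel the gradient-norm term, leaving a clean ``$F_T + $ path length'' bound.

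First, I would verify that the interval in~\eqref{eq:eta-range} is non-empty. By \cref{assump:surrogate-gap}, we have $L_t(\W_t) - \E_{\hat y_t \sim \pib(\thb_t)}[\ell(\hat y_t, y_t)] \ge \alpha L_t(\W_t)$, and by \cref{assump:self-bounding}, $\norm{\G_t}_\mathrm{F}^2 \le 2M L_t(\W_t)$. Dividing gives
\[
  \frac{2\prn*{L_t(\W_t) - \E_{\hat y_t \sim \pib(\thb_t)}[\ell(\hat y_t, y_t)]}}{\norm{\G_t}_\mathrm{F}^2} \ge \frac{2\alpha L_t(\W_t)}{2 M L_t(\W_t)} = \frac{\alpha}{M},
\]
so the lower and upper bounds on $\eta_t$ are compatible. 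The degenerate case $L_t(\W_t) = 0$ forces $\E[\ell] = 0$ and $\G_t = \zeros$ by self-bounding, so any $\eta_t \in [\alpha/M, \infty)$ works and the corresponding term drops out of the analysis.

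Next, I would use the upper bound in~\eqref{eq:eta-range} to get, for every round,
\[
  \frac{\eta_t}{2} \norm{\G_t}_\mathrm{F}^2 \le L_t(\W_t) - \E_{\hat y_t \sim \pib(\thb_t)}[\ell(\hat y_t, y_t)].
\]
Summing this over $t$ and combining with the dynamic regret bound from \cref{prop:dynamic-regret-ogd} applied to the convex losses $L_t$ with comparator sequence $\U_1,\dots,\U_T$ yields
\[
  \sum_{t=1}^T L_t(\W_t) - F_T \le \frac{D}{\eta_T}\prn*{\frac{D}{2} + P_T} + \sum_{t=1}^T \prn*{L_t(\W_t) - \E_{\hat y_t \sim \pib(\thb_t)}[\ell(\hat y_t, y_t)]}.
\]
The $\sum_t L_t(\W_t)$ terms on both sides cancel, giving $\sum_t \E[\ell(\hat y_t, y_t)] \le F_T + \frac{D}{\eta_T}\prn*{\frac{D}{2} + P_T}$. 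Finally, applying the lower bound $\eta_T \ge \alpha/M$ from~\eqref{eq:eta-range} replaces $1/\eta_T$ by $M/\alpha$ and produces exactly the claimed inequality.

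The only subtlety I anticipate is the delicate interplay in~\eqref{eq:eta-range}: the lower bound $\alpha/M$ is a fixed constant, whereas the upper bound involves quantities $L_t(\W_t)$, $\E_{\hat y_t \sim \pib(\thb_t)}[\ell(\hat y_t, y_t)]$, and $\norm{\G_t}_\mathrm{F}$ that depend on the current iterate, so it must be verified that the schedule $\eta_t$ can actually be chosen non-increasing. This is where the combination of \cref{assump:surrogate-gap} and \cref{assump:self-bounding} is essential, since together they guarantee the upper bound never drops below $\alpha/M$, leaving enough slack to select a non-increasing sequence (e.g., take $\eta_t$ to be the pointwise minimum of the previous $\eta_{t-1}$ and the current upper bound, floored at $\alpha/M$). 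Apart from this point, the argument is a direct assembly of the three ingredients: the OGD dynamic regret, the surrogate gap, and self-boundedness.
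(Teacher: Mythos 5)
Your proposal is correct and follows essentially the same route as the paper's proof: non-emptiness of the learning-rate interval via combining \cref{assump:surrogate-gap} and \cref{assump:self-bounding}, then substituting the upper bound on $\eta_t$ into \cref{prop:dynamic-regret-ogd} to cancel $\sum_t L_t(\W_t)$, and finally invoking $\eta_T \ge \alpha/M$. Your closing remark on realizing a non-increasing schedule (taking the minimum with $\eta_{t-1}$) is exactly how the paper handles it via the Polyak-style rate in \cref{sec:polyak}.
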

\begin{proof}
  First, we show that $\eta_t$ falling within the range in \eqref{eq:eta-range} always exists.
  From Assumptions~\ref{assump:surrogate-gap} and \ref{assump:self-bounding}, it holds that
  $L_t(\W_t) - \E_{\hat y_t \sim \pib(\thb_t)}[\ell(\hat y_t, y_t)] \ge \alpha L_t(\W_t) \ge \tfrac{\alpha}{2M}\norm{\G_t}_\mathrm{F}^2$.
  Thus, setting $\eta_t$ to $\frac{\alpha}{M}$ always satisfies~\eqref{eq:eta-range}.
  To prove the main claim, we use \cref{prop:dynamic-regret-ogd} with $\eta_T \ge \tfrac{\alpha}{M}$ and \eqref{eq:eta-range} to obtain\looseness=-1
  \begin{equation}
    \sum_{t=1}^T L_t(\W_t) - \sum_{t=1}^T L_t(\U_t)
    \le
    \frac{MD}{\alpha} \prn*{\frac{D}{2} + P_T}
    +
    \sum_{t=1}^T \prn*{L_t(\W_t) -\!\!\!\!\mathop{\E}_{\hat y_t \sim \pib(\thb_t)}\![\ell(\hat y_t, y_t)]}.
  \end{equation}
  Rearranging terms yields $\sum_{t=1}^T \E_{\hat y_t \sim \pib(\thb_t)}[\ell(\hat y_t, y_t)] \le F_T + \tfrac{MD}{\alpha} \prn*{\tfrac{D}{2} + P_T}$, as desired.
\end{proof}

As discussed in \cref{sec:introduction}, if the sequence ${(\x_t, y_t)}_{t=1}^T$ consists of $O(1)$ intervals of separable data, we can achieve $F_T = 0$ and $P_T = O(1)$ in \cref{thm:surrogate-gap} by changing $\U_t$ $O(1)$ times. This leads to an $O(1)$ bound on the cumulative target loss in this non-stationary setting, while the existing surrogate regret bound in \eqref{eq:static-surrogate-regret} breaks down.\looseness=-1

The essence of the proof lies in the design of the learning rate $\eta_t$ in \eqref{eq:eta-range}.
Specifically, the right-hand inequality in~\eqref{eq:eta-range} enables us to cancel the cumulative surrogate loss terms, $\sum_{t=1}^T L_t(\W_t)$, and to derive an upper bound on the cumulative target loss.
Meanwhile, Assumptions~\ref{assump:surrogate-gap} and \ref{assump:self-bounding}, corresponding to the surrogate-gap condition and the self-bounding property, respectively, ensure that the range of $\eta_t$ in \eqref{eq:eta-range} is non-empty;
this implies $\eta_T \ge \tfrac{\alpha}{M}$ and thus prevents the $O((1+P_T)/\eta_T)$ term in the OGD regret bound from becoming excessively large.
This proof strategy extends the original idea of \Citet{Van_der_Hoeven2020-ug} to the non-stationary setting and offers a more direct derivation of the target-loss bound through learning-rate adjustment in OGD.
In the next subsection, we discuss how to choose a learning rate $\eta_t$ that satisfies~\eqref{eq:eta-range}.

As a side note, in the last part of the proof, the bound over the entire interval $t = 1, \dots, T$ is obtained simply by summing over $t = 1,\dots,T$.
The same argument applies to any contiguous subinterval of $\{1, \dots, T\}$, with $F_T$ and $P_T$ defined on that subinterval.
As such, our method enjoys an interval-wise guarantee akin to \emph{strongly adaptive} bounds \citep{Daniely2015-kn}, which are also desirable under non-stationarity.\looseness=-1

\subsection{Polyak-Style Learning Rate}\label{sec:polyak}
The proof of \cref{thm:surrogate-gap} implies that, in theory, setting $\eta_t$ to the constant $\tfrac{\alpha}{M}$ is sufficient.
This constant learning rate was employed by \Citet{Van_der_Hoeven2020-ug} and \citet{Sakaue2024-mu} (whereas \Citet{Van_der_Hoeven2021-wi} employed OCO methods with AdaGrad-type guarantees).
In practice, however, $\eta_t = \tfrac{\alpha}{M}$ would be too conservative, resulting in slow adaptation to dynamic environments.
Thus, we suggest a more adaptive non-increasing learning rate satisfying \eqref{eq:eta-range}:
\begin{equation}\label{eq:polyak-style-rl}
  \!\!
  \eta_t
  \!=\!
  \min\set*{
    \frac{2\prn*{L_t(\W_t)\!-\!\E_{\hat y_t \sim \pib(\thb_t)}[\ell(\hat y_t, y_t)]}}{\norm{\G_t}_\mathrm{F}^2},
    \eta_{t-1}
  },
\end{equation}
where $\eta_0$ is any finite constant satisfying $\eta_0 \ge \alpha/M$.
If $\norm{\G_t}_\mathrm{F}=0$, we interpret the first term in the minimum as $+\infty$, so that $\eta_t=\eta_{t-1}$, and make no update.
Interestingly, this learning rate is analogous to Polyak's learning rate~\citep{Polyak1987-bs}, $\tfrac{L_t(\W_t) - \min_{\W \in \Wcal} L_t(\W)}{\norm{\G_t}_\mathrm{F}^2}$, which is designed to minimize an upper bound on the suboptimality with respect to~$L_t$.
In our setting, the goal is to bound the expected target loss.
This observation motivates replacing $\min_{\W \in \Wcal} L_t(\W)$ with $\E_{\hat y_t \sim \pib(\thb_t)}[\ell(\hat y_t, y_t)]$ (with an adjustment by a factor of two), as in~\eqref{eq:polyak-style-rl}.
The minimum with $\eta_{t-1}$ is for ensuring the non-increasing property.\footnote{
  As noted in \cref{sec:related-work}, decreasing Polyak learning rates are also used in stochastic optimization \citep{Loizou2021-bi,Orvieto2022-wn,Jiang2023-sr}, but serve a different purpose.
}
In \cref{asec:experiments}, we provide empirical evidence supporting the effectiveness of this Polyak-style learning rate;
it matches or outperforms both the constant learning rate $\eta_t = \tfrac{\alpha}{M}$ and the AdaGrad-type learning rate, with its advantage growing as non-stationarity increases.

When computing the Polyak-style learning rate, the dominant additional quantity is $\E_{\hat y_t \sim \pib(\thb_t)}[\ell(\hat y_t, y_t)]$, an expectation over $\pib(\thb_t)\!\in\!\triangle^N$ with possibly exponential $N=|\hat\Ycal|$.
This expectation can still be evaluated without enumerating~$\hat\Ycal$: the randomized decoding of \citet{Sakaue2024-mu} obtains $\pib(\thb_t)$ via Frank--Wolfe-type algorithms (e.g., \citealp{Lacoste-Julien2015-ir,Garber2021-tg}) over the convex hull $\conv(\Set{\!\rhb(\hat y)\!\in\!\R^d\!}{\!\hat y\!\in\!\hat\Ycal\!})$, where each iteration requires only a linear optimization oracle over this convex hull.
As discussed in \citet[Section~3.1]{Sakaue2024-mu}, an $\varepsilon$-approximate decoding distribution can typically be computed with $O(d^2 \log(d/\varepsilon))$ such oracle calls; combined with active-set control \citep{Beck2017-zs,Besancon2025-dh}, its support size is $O(d)$ as implied by Carath\'eodory's theorem.
Thus, the expectation requires only $O(d)$ evaluations of $\ell(\cdot,y_t)$, and the per-round overhead is dominated, up to logarithmic factors, by $\widetilde{O}(d^2)$ linear optimization oracle calls.
The same idea works with the decoding method for the convolutional Fenchel--Young loss discussed below.

\section{Addressing More General Settings}\label{sec:cfyloss}
We address more general structured prediction settings that may not satisfy the additional conditions in \cref{assump:additional}, thereby covering a broader range of tasks than prior work \citep{Sakaue2024-mu}.
In fact, many important problems, including label ranking with the normalized discounted cumulative gain (NDCG) loss (see \cref{asec:ranking-ndcg} for details) and precision at~$k$ \citep[Appendix~A]{Blondel2019-gd}, do not satisfy $\hat\Ycal = \Ycal$ and therefore fall outside \cref{assump:additional}.
We tackle such general settings by leveraging the convolutional Fenchel--Young loss framework.

\subsection{Convolutional Fenchel--Young Loss}\label{sec:cfyloss-def}
The convolutional Fenchel--Young loss, proposed by \citet{cao2025establishing}, provides an appealing framework for designing smooth surrogate losses that reflect structures of target losses and label sets.
Below is the definition.
\begin{definition}[Convolutional Fenchel--Young loss]\label{def:cfyloss}
  Given a ($\rhb, \elb^{\rhb}$)-decomposition of the target loss~$\ell$ in \cref{def:decomposition}, define $\tau\colon \R^d \to \R$ as follows:\footnote{
    This represents the negative of the Bayes risk of $\ell$ at $\bm{\eta} \in \triangle^K$ with $\mub = \E_{y \sim \bm{\eta}}[\rhb(y)]$, shifted by $\E_{y \sim \bm{\eta}}[c(y)]$.
  }
  \[
    \tau(\mub) = -\min\Set[\big]{\inpr{\mub, \elb^{\rhb}(\hat y)}}{{\hat y \in \hat\Ycal}}.
  \]
  Then, the convolutional Fenchel--Young loss $L_{\Omega_\tau} \colon \R^d \times \Ycal \to \Rp$ is defined as follows:
  \[
    L_{\Omega_\tau}(\thb, y) = \Omega_\tau^*(\thb) + \Omega_\tau(\rhb(y)) - \inpr{\thb, \rhb(y)},
  \]
  where $\Omega_\tau(\mub) = \Omega(\mub) + \tau(\mub)$ for any $\mub \in \R^d$ and $\Omega\colon\R^d\to\R\cup\set*{+\infty}$ is a regularization function.
\end{definition}
A convolutional Fenchel--Young loss is merely a specific instance of Fenchel--Young losses with regularizer~$\Omega_\tau$.
Nevertheless, this particular design is noteworthy in statistical learning because, when $\Omega$ is strongly convex, it provides a smooth surrogate loss such that the target excess risk is bounded linearly in the surrogate excess risk \citep[Theorem~15]{cao2025establishing}.

Below, let $\Ccal\!=\!\conv\prn*{\Set*{\!\rhb(y)\!\in\!\R^d\!}{\!y\!\in\!\Ycal\!}}$ and assume that $\Omega$ in \cref{def:cfyloss} satisfies the following condition.\footnote{Under \cref{assump:omega}, $\Omega$ is a closed proper convex function and $\dom(\Omega)$ is bounded.
  Consequently, $\Omega$ is \emph{co-finite} and hence $\dom(\Omega^*) = \R^d$ holds \citep[Corollary~13.3.1]{Rockafellar1970-sk}. These guarantee that \citet[Condition~1]{cao2025establishing} holds.}
\begin{assumption}\label{assump:omega}
  The function $\Omega$ in \cref{def:cfyloss} satisfies $\dom(\Omega) = \Ccal$ and is $\lambda$-strongly convex over $\Ccal$ with respect to the $\ell_2$-norm for some $\lambda > 0$.
\end{assumption}
This is a mild requirement.
We can always satisfy it by defining $\Omega$ as the sum of a $\lambda$-strongly convex function $\Psi$ such that $\Ccal \subseteq \dom(\Psi)$ and the indicator function of $\Ccal$.
An example of convolutional Fenchel--Young losses with $\Omega$ being the negative Shannon entropy is given in \citet[Section~4]{cao2025establishing}.
In addition, \cref{asec:cfyl-example} discusses an example for label ranking with the NDCG target loss (see also \cref{asec:ranking-ndcg}).
\Citet[Section~3.2]{Blondel2020-tu} also discuss regularizers satisfying \cref{assump:omega}.

\subsection{Key Lemmas}\label{sec:cfyloss-lemmas}
While the convolutional Fenchel--Young loss has been thoroughly investigated by \citet{cao2025establishing} from the perspective of statistical learning, applying it to our online learning setting requires new technical tools.
The proofs of the following lemmas are given in \cref{asec:cfyl-proof-discussion}.

One such tool is the following target--surrogate relationship, which serves as an alternative to the surrogate-gap condition (\cref{assump:surrogate-gap}) in \cref{sec:surrogate-gap}.
While this is a newly established relationship, it is related to \citet[Theorem~15]{cao2025establishing}; see \cref{asec:cfyl-connection} for details.
\begin{lemma}[Target--surrogate relation]\label{lem:target-surrogate}
  Let $(\thb, y) \in \R^d \times \Ycal$ and define a decoding distribution as
  \begin{equation}\label{eq:argmax-link}
    \pib(\thb) \in \argmin_{\pib \in \triangle^N}\Omega^*(\thb + \Lcal^{\rhb} \pib).
  \end{equation}
  Draw $\hat y$ from $\hat\Ycal$ following $\pib(\thb) \in \triangle^N$.
  Then, we have
  \begin{equation}\label{eq:target-surrogate}
    \E_{\hat y \sim \pib(\thb)}[\ell(\hat y, y)]
    =
    L_{\Omega_\tau}(\thb, y) - L_\Omega(\thb + \Lcal^{\rhb} \pib(\thb), y).
  \end{equation}
\end{lemma}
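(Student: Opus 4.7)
The plan is to reformulate $\Omega_\tau^*(\thb)$ as a minimization over $\pib \in \triangle^N$ that matches the decoder in~\eqref{eq:argmax-link}, substitute the result into both sides of~\eqref{eq:target-surrogate} to cancel common terms, and identify the residual with the expected target loss via the $(\rhb, \elb^\rhb)$-decomposition and Assumption~\ref{assump:basic}(1).

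For the reformulation, the key observation is that $\tau$ is the support function of the compact convex set $-\Lcal^\rhb \triangle^N$, since a linear function on a simplex is minimized at a vertex, giving $\tau(\mub) = -\min_{\pib \in \triangle^N}\inpr{\mub, \Lcal^\rhb \pib}$. Since $\dom(\Omega_\tau) = \dom(\Omega) = \Ccal$, this leads to
\[
\Omega_\tau^*(\thb) = \sup_{\mub \in \Ccal}\min_{\pib \in \triangle^N}\prn*{\inpr{\thb + \Lcal^\rhb \pib, \mub} - \Omega(\mub)}.
\]
The inner saddle function is upper semicontinuous and concave in $\mub$ (by strong convexity and properness of $\Omega$) and affine in $\pib$, while both $\Ccal$ and $\triangle^N$ are compact and convex under Assumption~\ref{assump:omega}. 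Sion's minimax theorem justifies the $\sup$--$\min$ swap, and the inner supremum then becomes $\Omega^*(\thb + \Lcal^\rhb \pib)$. Hence $\Omega_\tau^*(\thb) = \min_{\pib \in \triangle^N}\Omega^*(\thb + \Lcal^\rhb \pib) = \Omega^*(\thb + \Lcal^\rhb \pib(\thb))$, with $\pib(\thb)$ as in~\eqref{eq:argmax-link}; existence of the minimizer follows from compactness and continuity of $\Omega^*$.

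Substituting this back into $L_{\Omega_\tau}(\thb, y)$ and comparing with $L_\Omega(\thb + \Lcal^\rhb \pib(\thb), y)$, the $\Omega^*(\cdot)$, $\Omega(\rhb(y))$, and $\inpr{\thb, \rhb(y)}$ terms cancel, leaving
\[
L_{\Omega_\tau}(\thb, y) - L_\Omega(\thb + \Lcal^\rhb \pib(\thb), y) = \tau(\rhb(y)) + \inpr{\Lcal^\rhb \pib(\thb), \rhb(y)}.
\]
The $(\rhb, \elb^\rhb)$-decomposition immediately yields $\inpr{\Lcal^\rhb \pib(\thb), \rhb(y)} = \E_{\hat y \sim \pib(\thb)}[\ell(\hat y, y)] - c(y)$, while $\tau(\rhb(y)) = -\min_{\hat y}\prn*{\ell(\hat y, y) - c(y)} = c(y) - \min_{\hat y}\ell(\hat y, y) = c(y)$ by Assumption~\ref{assump:basic}(1). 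The two $c(y)$ contributions cancel, producing~\eqref{eq:target-surrogate}.

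The main obstacle is the $\sup$--$\min$ swap in the first step. Sion's theorem applies once one checks that $-\Omega$ is upper semicontinuous on the compact polytope $\Ccal$, which is automatic from $\Omega$ being proper closed convex. An alternative route is the Fenchel inf-convolution identity $(\Omega + \tau)^* = \Omega^* \convol \tau^*$, where $\tau^*$ is the indicator of $-\Lcal^\rhb \triangle^N$; this is valid since Assumption~\ref{assump:omega} ensures $\dom(\Omega^*) = \R^d$. Everything after this reformulation is direct algebra.
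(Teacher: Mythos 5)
Your proof is correct, and the key step---establishing $\Omega_\tau^*(\thb) = \min_{\pib \in \triangle^N}\Omega^*(\thb + \Lcal^{\rhb}\pib) = \Omega^*(\thb + \Lcal^{\rhb}\pib(\thb))$---is derived by a genuinely different route from the paper's. The paper simply cites \citet[Lemma~8]{cao2025establishing}, which obtains this identity via the Fenchel inf-convolution formula $(\Omega+\tau)^* = \Omega^* \convol \tau^*$ (exactly the ``alternative route'' you mention at the end), while your primary argument derives it from scratch: you recognize $-\tau$ as the support function of $\Lcal^{\rhb}\triangle^N$ over the simplex (linear functions over polytopes attain their extrema at vertices), rewrite $\Omega_\tau^*$ as a $\sup$--$\min$ over $\Ccal \times \triangle^N$, and invoke Sion's minimax theorem using the compactness of both sets, the closed convexity of $\Omega$ (for upper semicontinuity and concavity in $\mub$), and affinity in $\pib$. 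This is more elementary and self-contained, at the cost of verifying Sion's hypotheses; the paper's citation is shorter but relies on external machinery. Everything downstream---the algebraic rearrangement splitting off $L_\Omega(\thb + \Lcal^{\rhb}\pib(\thb), y)$, the use of the $(\rhb, \elb^{\rhb})$-decomposition to identify $\inpr{\Lcal^{\rhb}\pib(\thb), \rhb(y)} = \E_{\hat y \sim \pib(\thb)}[\ell(\hat y, y)] - c(y)$, and the use of \cref{assump:basic}(1) to show $\tau(\rhb(y)) = c(y)$---matches the paper's proof exactly. One tiny nit: you attribute concavity of the saddle function in $\mub$ partly to ``strong convexity'' of $\Omega$, but ordinary convexity suffices; strong convexity plays no role in this lemma.
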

The distribution $\pib(\thb)$ in \eqref{eq:argmax-link} can be computed efficiently in most cases, with the support size being polynomial in~$d$;
see \citet[Section~4]{cao2025establishing} and \cref{asec:cfyl-example}.

Moreover, the following lower bound on $L_\Omega\mspace{-1mu}(\mspace{-1mu}\thb\mspace{-.5mu}+\mspace{-.5mu}\Lcal^{\rhb} \pib(\thb),\mspace{-1mu} y\mspace{-1mu})$, the subtracted term in \eqref{eq:target-surrogate}, serves as an alternative to the self-bounding property (\cref{assump:self-bounding}).
\begin{lemma}\label{lem:target-surrogate-lower-bound}
  Under \cref{assump:omega} and the same conditions as \cref{lem:target-surrogate}, it holds that
  \begin{equation}
      L_\Omega(\thb + \Lcal^{\rhb} \pib(\thb), y)
      \ge
      \frac{\lambda}{2} \norm{\nabla L_{\Omega_\tau}(\thb, y)}_2^2,
  \end{equation}
  where the convolutional Fenchel--Young loss is differentiable in its first argument \citep[Corollary~11]{cao2025establishing}, and hence the gradient $\nabla L_{\Omega_\tau}(\thb, y)$ is well defined.
\end{lemma}
Intuitively, this lower bound is a quadratic-growth consequence of the strong convexity of $\Omega$, combined with the envelope theorem used in the analysis of the convolutional Fenchel--Young loss.

\subsection{Main Result}\label{sec:cfyloss-main}
We are ready to prove our main result, a ``small-surrogate-loss + path-length'' bound for the broader class of tasks that may not satisfy \cref{assump:additional}.
The following theorem is obtained by applying the OGD scheme used in \cref{thm:surrogate-gap} to the convolutional Fenchel--Young loss.
Thus, we can use the Polyak-style learning rate given in \cref{sec:polyak}.

\begin{theorem}\label{thm:cfyloss}
  Let $L_t \colon \W \mapsto L_{\Omega_\tau}(\W\x_t, y_t)$ be a convolutional Fenchel--Young loss, where $\Omega_\tau = \Omega + \tau$ and $\Omega$ satisfies \cref{assump:omega}.
  For $t = 1,\dots,T$, compute $\W_t \in \Wcal$ using OGD with non-increasing learning rate~$\eta_t$ such that
  \begin{equation}\label{eq:eta-range-cfyl}
    \lambda \le \eta_t \le \frac{2\prn*{L_t(\W_t) - \E_{\hat y_t \sim \pib(\thb_t)}[\ell(\hat y_t, y_t)]}}{\norm{\G_t}_\mathrm{F}^2},
  \end{equation}
  where
  $\thb_t\!=\!\W_t\x_t$ and
  $\pib\prn*{\thb_t}\!\in\!\argmin_{\pib \in \triangle^N}\Omega^*(\thb_t + \Lcal^{\rhb} \pib)$ is the decoding distribution given in \eqref{eq:argmax-link}.
  As in \cref{thm:surrogate-gap}, the upper bound in \eqref{eq:eta-range-cfyl} is interpreted as $+\infty$ when $\norm{\G_t}_\mathrm{F}=0$.
  The range of $\eta_t$ in \eqref{eq:eta-range-cfyl} is non-empty and, for any $\U_1,\dots,\U_T \in \Wcal$, $P_T = \sum_{t=2}^T \norm{\U_t - \U_{t - 1}}_\mathrm{F}$, and $F_T = \sum_{t=1}^T  L_t(\U_t)$, we have
  \[
    \sum_{t=1}^{T}
    \E_{\hat y_t \sim \pib(\thb_t)}[\ell(\hat y_t, y_t)]
    \le F_T + \frac{D}{\lambda} \prn*{\frac{D}{2} + P_T}.
  \]
\end{theorem}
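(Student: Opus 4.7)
The plan is to mirror the proof of \cref{thm:surrogate-gap} almost verbatim, treating \cref{lem:target-surrogate,lem:target-surrogate-lower-bound} as the replacements for the surrogate-gap condition (\cref{assump:surrogate-gap}) and the self-bounding property (\cref{assump:self-bounding}) used there. Concretely, I would first verify that $\eta_t = \lambda$ always lies in the range prescribed by \eqref{eq:eta-range-cfyl}, and then invoke \cref{prop:dynamic-regret-ogd} with $\eta_T \ge \lambda$ and use the upper-bound side of \eqref{eq:eta-range-cfyl} to cancel the $\sum_t L_t(\W_t)$ terms and obtain the stated cumulative target-loss bound.

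For the non-emptiness step, I would chain \cref{lem:target-surrogate,lem:target-surrogate-lower-bound} with the chain rule and the input bound from \cref{assump:basic}. By \cref{lem:target-surrogate},
\[
L_t(\W_t) - \E_{\hat y_t \sim \pib(\thb_t)}[\ell(\hat y_t, y_t)] = L_\Omega\prn*{\thb_t + \Lcal^{\rhb}\pib(\thb_t),\, y_t},
\]
and \cref{lem:target-surrogate-lower-bound} lower bounds the right-hand side by $\tfrac{\lambda}{2}\norm{\nabla L_{\Omega_\tau}(\thb_t, y_t)}_2^2$. Since $\Omega_\tau = \Omega + \tau$ is strongly convex under \cref{assump:omega}, $\Omega_\tau^*$ is smooth and hence $L_t(\W) = L_{\Omega_\tau}(\W\x_t, y_t)$ is differentiable in $\W$; the chain rule gives $\G_t = \nabla L_{\Omega_\tau}(\thb_t, y_t)\, \x_t^\top$, so $\norm{\G_t}_\mathrm{F}^2 = \norm{\nabla L_{\Omega_\tau}(\thb_t, y_t)}_2^2 \, \norm{\x_t}_2^2 \le \norm{\nabla L_{\Omega_\tau}(\thb_t, y_t)}_2^2$ using $\norm{\x_t}_2 \le 1$. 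Combining these inequalities yields $L_t(\W_t) - \E[\ell(\hat y_t, y_t)] \ge \tfrac{\lambda}{2}\norm{\G_t}_\mathrm{F}^2$, so $\eta_t = \lambda$ indeed satisfies \eqref{eq:eta-range-cfyl} (with the natural convention when $\norm{\G_t}_\mathrm{F} = 0$ of regarding the upper bound as $+\infty$).

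For the main inequality, since the chosen learning rate is non-increasing with $\eta_T \ge \lambda$, \cref{prop:dynamic-regret-ogd} applied to the convex losses $L_t$ yields
\[
  \sum_{t=1}^T \prn*{L_t(\W_t) - L_t(\U_t)} \le \frac{D}{\lambda}\prn*{\frac{D}{2} + P_T} + \sum_{t=1}^T \frac{\eta_t}{2}\norm{\G_t}_\mathrm{F}^2.
\]
The upper-bound side of \eqref{eq:eta-range-cfyl} gives $\tfrac{\eta_t}{2}\norm{\G_t}_\mathrm{F}^2 \le L_t(\W_t) - \E_{\hat y_t \sim \pib(\thb_t)}[\ell(\hat y_t, y_t)]$. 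Substituting this, cancelling $\sum_t L_t(\W_t)$ on both sides, and rearranging recovers the claim $\sum_{t=1}^T \E_{\hat y_t \sim \pib(\thb_t)}[\ell(\hat y_t, y_t)] \le F_T + \tfrac{D}{\lambda}\prn*{\tfrac{D}{2} + P_T}$.

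I expect the OGD-side bookkeeping to be routine: the real conceptual work has already been discharged in \cref{lem:target-surrogate,lem:target-surrogate-lower-bound}, which encode the structure of the convolutional Fenchel--Young loss and replace the surrogate-gap/self-bounding pair from \cref{sec:surrogate-gap}. The only additional subtlety here is the chain-rule step, where $\norm{\x_t}_2 \le 1$ is essential to convert the score-space gradient bound of \cref{lem:target-surrogate-lower-bound} into a Frobenius-norm bound on the matrix-space subgradient $\G_t$; without this input normalization, the lower bound on $\eta_t$ would degrade by a factor of $\norm{\x_t}_2^{-2}$, ruining the clean coefficient $D/\lambda$ in the final bound.
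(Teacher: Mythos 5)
Your proof is correct and follows essentially the same route as the paper's: verify $\eta_t = \lambda$ is admissible by chaining Lemmas~\ref{lem:target-surrogate} and~\ref{lem:target-surrogate-lower-bound} with $\norm{\G_t}_\mathrm{F}^2 = \norm{\nabla L_{\Omega_\tau}(\thb_t,y_t)}_2^2\norm{\x_t}_2^2 \le \norm{\nabla L_{\Omega_\tau}(\thb_t,y_t)}_2^2$, then plug the upper end of \eqref{eq:eta-range-cfyl} into Proposition~\ref{prop:dynamic-regret-ogd} and cancel the $\sum_t L_t(\W_t)$ terms. Your extra remark about why $L_t$ is differentiable (strong convexity of $\Omega_\tau$ implies smoothness of $\Omega_\tau^*$) and the explicit chain-rule computation of $\G_t$ are details the paper leaves implicit, but the logical content is identical.
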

\begin{proof}
  Again, we begin by confirming that there exists $\eta_t$ that satisfies \eqref{eq:eta-range-cfyl}.
  \Cref{lem:target-surrogate,lem:target-surrogate-lower-bound} imply
  \begin{align}
    L_t(\W_t) - \E_{\hat y_t \sim \pib(\thb_t)}[\ell(\hat y_t, y_t)]
    ={}&
    L_\Omega(\thb_t + \Lcal^{\rhb} \pib(\thb_t), y_t)
    \\
    \ge{}&
      \frac{\lambda}{2} \norm{\nabla L_{\Omega_\tau}(\thb_t, y_t)}_2^2,
  \end{align}
  and the right-hand side is bounded below by $\tfrac{\lambda}{2}\norm{\G_t}_\mathrm{F}^2$ since
  $\norm{\G_t}_\mathrm{F}^2 = \norm{\nabla L_{\Omega_\tau}(\thb_t, y_t)}_2^2\norm{\x_t}_2^2 \le \norm{\nabla L_{\Omega_\tau}(\thb_t, y_t)}_2^2$ holds from $\norm{\x_t}_2 \le 1$ (\cref{assump:basic}).
  Therefore, $\eta_t = \lambda$ always satisfies \eqref{eq:eta-range-cfyl}.
  To prove the main claim, we use \cref{prop:dynamic-regret-ogd} with $\eta_T \ge \lambda$ and \eqref{eq:eta-range-cfyl}, obtaining
  \begin{equation}
    \sum_{t=1}^T \prn*{L_t(\W_t) - L_t(\U_t)}
    \le
    \frac{D}{\lambda} \prn*{\frac{D}{2} + P_T}
    \!+\!
    \sum_{t=1}^T \prn*{L_t(\W_t) - \E_{\hat y_t \sim \pib(\thb_t)}[\ell(\hat y_t, y_t)]}.
  \end{equation}
  Rearranging terms yields $\sum_{t=1}^T \E_{\hat y_t \sim \pib(\thb_t)}[\ell(\hat y_t, y_t)] \le F_T + \frac{D}{\lambda} \prn*{\frac{D}{2} + P_T}$, as desired.
\end{proof}
Let us discuss the difference from the analysis in \cref{thm:surrogate-gap}.
In that proof, we used the surrogate-gap condition, $L_t(\W_t) - \E_{\hat y_t \sim \pib(\thb_t)}[\ell(\hat y_t, y_t)] \ge \alpha L_t(\W_t)$, to obtain the desired bound.
In the general setting considered here, no decoding method is known that guarantees such a condition.
Fortunately, however, \cref{lem:target-surrogate,lem:target-surrogate-lower-bound} enable us to obtain the alternative condition, $L_t(\W_t) - \E_{\hat y_t \sim \pib(\thb_t)}[\ell(\hat y_t, y_t)] \ge \tfrac{\lambda}{2}\norm{\G_t}_\mathrm{F}^2$.
This requirement is easier to satisfy in light of $\norm{\G_t}_\mathrm{F}^2 \le \tfrac{2}{\lambda}L_t(\W_t)$, which follows from the self-bounding property of $L_t$ (\cref{assump:self-bounding}) implied by the $\lambda$-strong convexity of $\Omega_\tau$.
The above proof shows that this weaker requirement is sufficient to derive the ``small-surrogate-loss + path-length'' bound.

As a cautionary note, one might be tempted to increase the strong convexity parameter $\lambda$ arbitrarily to shrink the $O((1+P_T)/\lambda)$ term; however, this also increases the scale of $L_{\Omega_\tau}$ and thus enlarges $F_T$ (see also \citealp[Proposition~2]{Blondel2020-tu}).
Therefore, the value of~$\lambda$ should be chosen to balance this trade-off.

\section{Lower Bound}\label{sec:lower-bound}
We present a lower bound that complements our ``small-surrogate-loss + path-length'' upper bounds discussed so far.
\begin{theorem}\label{thm:lower-bound}
  For any possibly randomized learner and any nonnegative integers $T_{\mathsf F},T_{\mathsf P}$ with $T=T_{\mathsf F}+T_{\mathsf P}>0$,
  there exists an instance of online binary classification with the 0-1 target loss and the smooth hinge surrogate loss,
  together with a comparator sequence $\U_1,\dots,\U_T$, such that
  \[
    \E\brc*{\sum_{t=1}^T\ell(\hat y_t, y_t)} \ge \frac{T}{2},
    \;\;
    F_T \le 3T_{\mathsf F},
    \;\;
    \text{and}
    \;\;
    P_T \le \sqrt{2}T_{\mathsf P},
  \]
  where the expectation is taken over the learner's possible randomness.
  Consequently, for the same comparator sequence, the expected target loss is $\Omega(F_T+P_T)$.
  Equivalently, the learner can suffer $\Omega(T_{\mathsf F}+T_{\mathsf P})$ expected target loss while there exist comparators with $F_T=O(T_{\mathsf F})$ and $P_T=O(T_{\mathsf P})$.
\end{theorem}
We present the proof in \cref{asec:lower-bound}.
Since our upper bounds hold for any comparator sequence, it suffices to identify comparator sequences for which the lower bound holds.
This lower bound suggests that the joint linear dependence on $F_T$ and $P_T$ cannot be improved in the worst case.

Note that the lower bound might be avoided by focusing on specific instances other than binary classification with the smooth hinge surrogate loss. \Cref{thm:lower-bound} is intended to complement our upper bounds, which apply to general settings that subsume this specific instance.

\section{Conclusion}\label{sec:conclusion}
We have established ``small-surrogate-loss + path-length'' bounds on the cumulative target loss for non-stationary online structured prediction.
The core idea is to synthesize the dynamic regret analysis of OGD with the surrogate gap.
En route to this, we have introduced a Polyak-style learning rate, which directly yields target-loss guarantees and works well empirically.
We have also extended this approach to a broader range of problem settings than previously considered by leveraging the convolutional Fenchel--Young loss.
This extension is enabled by new technical lemmas on the convolutional Fenchel--Young loss.
Finally, we have obtained a lower bound to complement the upper bounds.

We close with limitations and future directions.
This work is restricted to full-information feedback.
Existing surrogate regret bounds under limited feedback, such as bandit feedback, explicitly depend on $T$ \citep{Van_der_Hoeven2020-ug,Van_der_Hoeven2021-wi,Shibukawa2025-rf}, unlike their full-information counterparts.
Obtaining meaningful dynamic-regret-based guarantees in such settings remains a challenging direction for future work.
Furthermore, non-stationarity has also been studied from a stability-based perspective \citep{Huang2025-kb}.
Clarifying the relationship between this line of work and our ``small-surrogate-loss + path-length'' guarantees is another interesting direction.

\section*{Acknowledgements}
SS is supported by JST BOOST Program Grant Number JPMJBY24D1.
HB is supported by JST PRESTO Grant Number JPMJPR24K6.
YC is supported by Google PhD Fellowship program.
The authors thank the anonymous reviewers for their helpful reviews and comments.

\section*{Impact Statement}
This paper presents work whose goal is to advance the field of Machine
Learning. There are many potential societal consequences of our work, none of which we feel must be specifically highlighted here.


\newrefcontext[sorting=nyt]
\printbibliography[heading=bibintoc]

\appendix
\onecolumn

\section{Examples of Structured Prediction Problems}\label[appendix]{asec:target-examples}

Below we discuss some examples provided in \citet{Blondel2019-gd} and \citet{Sakaue2024-mu}.

\subsection{Multiclass Classification}
Let $\Ycal = \hat\Ycal = [K]$ and $\ell(\hat y, y) = \ind_{\hat y \neq y}$ be the 0-1 loss, which trivially satisfies the zero-loss condition in \cref{assump:basic}.
An affine decomposition of the dimensionality $d = K$ (see \cref{assump:additional}) is given by $\rhb(y) = \e^y$, $\V = \ones \ones^\top - I$, $\bm{b} = 0$, and $c \equiv 0$, where $\ones \in \R^d$ is the all-ones vector and $I \in \R^{d \times d}$ is the identity matrix, i.e., $\ell(\hat y, y) = \inpr{\e^{\hat y}, (\ones \ones^\top - I)\e^{y}} = \inpr{\e^{\hat y}, \ones - \e^{y}}$.
Let $\pib \in \triangle^d$ and $\hat\mub = \E_{\hat y \sim \pib}\e^{\hat y} \in \triangle^d$.
Then, we have
$
\E_{\hat y \sim \pib}[\ell(\hat y, y)]
=
\inpr{\hat\mub, \ones - \e^y}
=
\frac12\norm*{\hat\mub - \e^{y}}_1
$
for any $y \in \Ycal$ and
$\norm{\e^y - \e^{y'}}_1 = 2$ for any $y, y' \in \Ycal$ with $y \neq y'$.
Therefore, the conditions in \cref{assump:additional} are satisfied.

\subsection{Multilabel Classification}
Consider predicting $d$ binary outcomes, $0$ or $1$.
We identify $\Ycal=\hat\Ycal$ with $\{0,1\}^d$ (equivalently, a set of size $K=2^d$).
For the target loss, we use the Hamming loss, $\ell(\hat y, y) = \frac{1}{d}\sum_{i=1}^d \ind_{\hat y_i \neq y_i}$, where $\hat y_i$ and $y_i$ are the $i$th outcome of $\hat y \in \hat\Ycal$ and $y \in \Ycal$, respectively, for $i \in [d]$.
This target loss trivially satisfies the zero-loss condition in \cref{assump:basic}.
A $d$-dimensional affine decomposition of the Hamming loss is given as follows:
let $\rhb(y) \in \set{0, 1}^d$ be a vector whose $i$th element represents the $i$th outcome of~$y$, $\V = -\frac{2}{d} I$, $\bm{b} = \frac{1}{d}\ones$, and $c(y) = \frac{1}{d}\inpr{\rhb(y), \ones}$; then, it holds that $\ell(\hat y, y) = \frac1d\prn*{ \inpr{\rhb(\hat y), \ones} + \inpr{\rhb(y), \ones} - 2\inpr{\rhb(\hat y), \rhb(y)} }$.
We also have
$
\E_{\hat y \sim \pib}\brc*{\ell(\hat y, y)}
=
\frac{1}{d}\norm{\E_{\hat y \sim \pib}\rhb(\hat y) - \rhb(y)}_1
$
for any $y \in \Ycal$
and $\norm{\rhb(y) - \rhb(y')}_1 \ge 1$ for any $y, y' \in \Ycal$ with $y \neq y'$.
Therefore, \cref{assump:additional} is satisfied.

\subsection{Ranking with NDCG Loss}\label[appendix]{asec:ranking-ndcg}
Consider predicting a ranking of documents, a common task in information retrieval.
Let $\Ycal = [k]^d$ be the set of relevance scores of $d$ documents and $\hat\Ycal$ the permutations of $[d]$.
The normalized discounted cumulative gain (NDCG) loss $\ell\colon\hat\Ycal \times \Ycal \to \Rp$ with weights $w_1,\dots,w_d\ge0$ is defined as follows:
\[
  \ell(\hat y, y)
  =
  1 - \frac{1}{N(y)}\sum_{i = 1}^d y_i w_{\hat y(i)},
\]
where
$y_i \in [k]$ is the relevance score of the $i$th document,
$\hat y(i) \in [d]$ is the $i$th element of permutation $\hat y$ of $[d]$,
and $N(y) = \max_{\hat y \in \hat\Ycal} \sum_{i=1}^d y_i w_{\hat y(i)}$ is the normalization constant; we assume $N(y) > 0$ for all $y \in \Ycal$.
A $d$-dimensional ($\rhb, \elb^{\rhb}$)-decomposition of this loss is given by
$\rhb(y) = -(y_1,\dots, y_d)^\top / N(y)$,
$\elb^{\rhb}(\hat y) = \prn*{w_{\hat y(1)}, \dots, w_{\hat y(d)}}^\top
$,
and $c \equiv 1$.
Setting $\hat y$ to the best permutation (a maximizer in the definition of $N(y)$) makes the loss zero.
Note that this example does not satisfy the additional assumptions in \cref{assump:additional} due to $\Ycal \neq \hat\Ycal$.
Still, the framework in \cref{sec:cfyloss} can handle this case using a convolutional Fenchel--Young loss as a surrogate loss.

\section{Examples of Surrogate Losses}\label[appendix]{asec:surrogate-examples}
We provide examples of surrogate losses.
All of them satisfy the self-bounding property in \cref{assump:self-bounding}.

\subsection{Smooth Hinge Loss}\label[appendix]{asec:smooth-hinge-loss}
We describe the details of the smooth hinge loss used in \Citet{Van_der_Hoeven2020-ug} and \Citet{Van_der_Hoeven2021-wi}.
For any $\W \in \Wcal$, we define the multiclass margin of $\W$ at round $t$ as follows:
\[
  m_t(\W, y_t) = \inpr{\e^{y_t}, \W \x_t} - \max_{y \neq y_t}\inpr{\e^y, \W \x_t}.
\]
The smooth hinge loss of \citet{Rennie2005-hk} (strictly speaking, its multiclass extension based on \citealt{Crammer_undated-db}) is defined as follows:
\[
  L_t(\W)
  =
  L(\W \x_t, y_t) =
  \begin{cases}
    1 - 2m_t(\W, y_t) & \text{if $m_t(\W, y_t) \le 0$}, \\
    \max\set*{1 - m_t(\W, y_t), 0}^2 & \text{if $m_t(\W, y_t) > 0$}.
  \end{cases}
\]
We can check the convexity of $L_t(\W)$ as follows:
$m_t(\W, y_t)$ is concave in $\W$ since it is the negative of the pointwise maximum of linear functions, and
$L_t(\W)$ viewed as a univariate function of $m_t(\W, y_t)$ is convex and non-increasing;
the composition of these two functions is convex \citep[Section~3.2.4]{Boyd2004-jl}.

For convenience, we also define
\[
  m^*_t(\W) = \mathop{\max_{y \in \Ycal}} m_t(\W, y)
  \qquad
  \text{and}
  \qquad
  y^*_t \in \argmax_{y \in \Ycal}\inpr{\e^y, \W \x_t}.
\]
Note that, for any $y' \in \Ycal \setminus \set{y^*_t}$, we have
\[
   m_t(\W, y')
   =
   \inpr{\e^{y'}, \W \x_t} - \max_{y \neq y'}\inpr{\e^y, \W \x_t}
   =
   \inpr{\e^{y'}, \W \x_t} - \inpr{\e^{y^*_t}, \W \x_t}
   \le
   0.
\]
In addition, if $y^*_t = y_t$, we have
\begin{equation}
  m_t(\W, y_t) = \inpr{\e^{y^*_t}, \W \x_t} - \max_{y \neq y_t}\inpr{\e^y, \W \x_t} \ge 0.
\end{equation}
Therefore, under $y^*_t = y_t$, we have $m^*_t(\W) = m_t(\W, y^*_t) = m_t(\W, y_t) \ge 0$.

Defining
$
  \tilde y_t \in \argmax_{y \neq y_t}\inpr{\e^y, \W \x_t}
$,
we can express a subgradient of $L_t$ at $\W$ as follows:
\[
  \G_t(\W)
  =
  \begin{cases}
    2(\e^{\tilde y_t} - \e^{y_t})\x_t^\top & \text{if $y^*_t \neq y_t$}, \\
    2(1 - m^*_t(\W))(\e^{\tilde y_t} - \e^{y_t})\x_t^\top & \text{if $y^*_t = y_t$ and  $m^*_t(\W) < 1$}, \\
    0 & \text{if $y^*_t = y_t$ and $m^*_t(\W) \ge 1$}, \\
  \end{cases}
\]
where $0$ means the all-zero matrix.
Thus, $\norm{\G_t(\W)}_\mathrm{F} \le 2\sqrt{2}\norm{\x_t}_2 \le 2\sqrt{2}$ holds in any case.
As for the self-bounding property, the third case is trivial.
In the first case, $y^*_t \neq y_t$ implies $m_t(\W, y_t) \le 0$ and hence
\[
  \norm{\G_t(\W)}_\mathrm{F}^2 = 8 \norm{\x_t}_2^2 \le 8 L_t(\W).
\]
In the second case, we have
\[
  \norm{\G_t(\W)}_\mathrm{F}^2 = 8(1 - m^*_t(\W))^2 \norm{\x_t}_2^2 \le 8 L_t(\W).
\]
Thus, $\norm{\G_t(\W)}_\mathrm{F}^2 \le 8 L_t(\W)$ holds in any case.
Note that $L_t$ is non-smooth in $\W \in \Wcal$ since, if multiple~$\tilde y_t$ attain $\max_{y \neq y_t}\inpr{\e^y, \W \x_t}$, then the subgradients are non-unique, violating smoothness (or Lipschitz continuity of the gradient).
We remark that this non-smoothness is caused by the multiclass margin $m_t(\W, y_t)$;
indeed, $L_t$ viewed as a univariate function of $m_t(\W, y_t)$ is smooth.

\subsection{Fenchel--Young Loss}
Consider using the Fenchel--Young loss defined in \cref{sec:fenchel-young} as a surrogate loss, namely,
\[
  L_t(\W)
  =
  L_\Omega(\W \x_t, y_t)
  =
  \Omega^*(\W \x_t) + \Omega(\rhb(y_t)) - \inpr{\W \x_t, \rhb(y_t)}.
\]
We assume the regularizer $\Omega$ to be $\lambda$-strongly convex with respect to the Euclidean norm.
For convenience, let $\Ccal = \conv\prn*{ \Set*{\rhb(y) \in \R^d}{y \in \Ycal} }$ and $\thb = \W\x_t$.
We define the regularized prediction (strictly speaking, its encoded vector) as\looseness=-1
\[
  \rhb_\Omega(\thb) = \argmax \Set*{\inpr{\thb, \mub} - \Omega(\mub)}{ \mub \in \Ccal }.
\]
Then, it holds that $\nabla L_t(\W) = \prn*{\rhb_\Omega(\thb) - \rhb(y_t)}\x_t^\top$ \citep[Proposition~2]{Blondel2020-tu}, which means that $\norm{\nabla L_t(\W)}_\mathrm{F}$ is at most the $\ell_2$-diameter of $\Ccal$.
Moreover, we have $\frac{\lambda}{2}\norm{\rhb_\Omega(\thb) - \rhb(y_t)}_2^2 \le L_t(\W)$ due to the $\lambda$-strong convexity of $\Omega$ \citep[Proposition~3]{Blondel2022-uo}, and thus the self-bounding property holds with $M = 1/\lambda$.
Below are two examples of the Fenchel--Young loss.

\paragraph{Logistic loss.}
The logistic loss can be seen as a Fenchel--Young loss.
Consider the classification problem with $K$ classes.
Let $d = K$ and $\rhb(y) = \e^y$ for $y \in \Ycal = [K]$ and $\Ccal = \triangle^K$.
If we adopt the negative Shannon entropy, $\Omega(\mub) = \sum_{y \in \Ycal} \mu_y \ln \mu_y$ for $\mub \in \triangle^K$, as a regularizer, the resulting Fenchel--Young loss is the logistic loss \citep[Section~4.3]{Blondel2020-tu}:
\[
  L_t(\W)
  =
  L_\Omega(\W \x_t, y_t)
  =
  \ln \sum_{y \in \Ycal} \exp(\theta_y) - \theta_{y_t},
\]
where $\thb = \W \x_t$ and $\theta_y = \inpr{\e^y, \W\x_t}$ for $y \in \Ycal$.
In this case, the regularized prediction, $\rhb_\Omega$, equals the softmax function.
The negative Shannon entropy is $1$-strongly convex on $\triangle^K$ with respect to the $\ell_1$-norm, and hence with respect to the $\ell_2$-norm.
Thus, the self-bounding property holds with $M = 1$.
Note that in some cases the base of the logarithm in the logistic loss is not Euler's number $\mathrm{e}$ but rather $2$ or $K$
(\citealt{Van_der_Hoeven2020-ug};
\citealt{Van_der_Hoeven2021-wi};
\citealt{Sakaue2024-mu});
accordingly, the Lipschitz constant and the value of $M$ also change.

\paragraph{SparseMAP loss.}
The SparseMAP loss, proposed by \citet{Niculae2018-qg}, is another example of a Fenchel--Young loss that applies to general structured prediction problems.
Let $\Ycal\!=\![K]$ be a finite set of labels and $\Ccal\!=\!\conv\prn*{\Set*{\rhb(y) \in \R^d}{y \in \Ycal}}$.
Let
\[
  \Omega(\mub)
  =
  \begin{cases}
    \frac12\norm{\mub}_2^2 & \text{if $\mub \in \Ccal$}, \\
    +\infty & \text{otherwise}.
  \end{cases}
\]
The resulting Fenchel--Young loss, $\thb \mapsto L_\Omega(\thb, y)$, is called the SparseMAP loss, whose regularized prediction
$\rhb_\Omega(\thb) = \argmax \Set*{\inpr{\thb, \mub} - \Omega(\mub)}{ \mub \in \Ccal }$
tends to have a sparse support.
Since $\Omega$ is $1$-strongly convex with respect to the $\ell_2$-norm on $\Ccal$, $L_\Omega$ is $1$-smooth and hence self-bounding with $M = 1$.

\subsection{Convolutional Fenchel--Young Loss: The Ranking Case}\label[appendix]{asec:cfyl-example}
The convolutional Fenchel--Young loss defined in \cref{def:cfyloss} satisfies the self-bounding property similarly to Fenchel--Young losses.
Below, we discuss the computational aspect when applying our method with the convolutional Fenchel--Young loss to the label ranking problem with the NDCG loss in \cref{asec:ranking-ndcg}.
Recall that the convolutional Fenchel--Young loss is an instance of the Fenchel--Young loss generated by $\Omega_\tau = \Omega + \tau$, where $\tau(\mub) = -\min_{\hat y \in \hat\Ycal} \inpr{\mub, \elb^{\rhb}(\hat y)}$.
As in \cref{assump:omega}, we assume that $\Omega$ is a $\lambda$-strongly convex function with $\dom(\Omega) = \Ccal$,
e.g., $\Omega = \frac{\lambda}{2}\norm{\cdot}_2^2 + \mathbb{I}_\Ccal$, where $\mathbb{I}_\Ccal(\mub) = +\infty$ if $\mub \notin \Ccal$ and $0$ otherwise.

When applying our method, we need to
(i) compute the gradient of the convolutional Fenchel--Young loss $L_{\Omega_\tau}(\thb, y)$ with respect to $\thb$ and
(ii) draw a sample $\hat y \in \hat\Ycal$ from the decoding distribution $\pib(\thb) \in \argmin_{\pib \in \triangle^N} \Omega^*(\thb + \Lcal^{\rhb}\pib)$ to convert $\thb \in \R^d$ into a prediction $\hat y \in \hat\Ycal$.
Both can be addressed through the following problem, as discussed in \citet[Section~5]{cao2025establishing}:\looseness=-1
\begin{equation}\label{eq:problem-on-V}
  \min_{\bm{\nu} \in \Vcal}
  \
  \Omega^*(\thb + \bm{\nu})
  \qquad
  \text{where $\Vcal = \conv\prn*{ \Set*{\elb^{\rhb}(\hat y)}{\hat y \in \hat\Ycal} }$}.
\end{equation}
By Danskin's theorem \citep{Danskin1966-gb}, we can compute the gradient of $\Omega^*$ at any $\bm{\xi} \in \R^d$ as
\[
  \nabla \Omega^*(\bm{\xi}) = \argmax_{\mub \in \R^d} \inpr{\bm{\xi}, \mub} - \Omega(\mub).
\]
Thus, we can solve this problem using first-order methods.

In the ranking problem in \cref{asec:ranking-ndcg}, $\hat\Ycal$ is the set of permutations of $[d]$,
and the loss vector of the NDCG loss is given by $\elb^{\rhb}(\hat y) = \prn*{w_{\hat y(1)}, \dots, w_{\hat y(d)}}^\top$,
where $w_{1}, \dots, w_{d} \ge 0$ are the weights of~$d$ documents.
Let $\bm{P}_{\hat y} \in \set{0, 1}^{d \times d}$ denote the permutation matrix corresponding to $\hat y \in \hat\Ycal$.
Then, we have $\elb^{\rhb}(\hat y) = \bm{P}_{\hat y} \bm{w}$, where $\bm{w} = (w_1,\dots,w_d)^\top$.
Consequently, problem \eqref{eq:problem-on-V} can be rewritten as
\begin{equation}\label{eq:birkhoff}
  \min_{\bm{P} \in \mathcal{B}}
  \
  \Omega^*(\thb + \bm{P}\bm{w})
  \qquad
  \text{where $\mathcal{B} = \Set*{\bm{P} \in \R^{d\times d}_{\ge 0}}{ \bm{P} \ones = \ones,\; \bm{P}^\top \ones = \ones}$}.
\end{equation}
The set $\mathcal{B}$ is the so-called Birkhoff polytope, which is the convex hull of the permutation matrices.
Consider solving problem~\eqref{eq:birkhoff} with a Frank--Wolfe-type algorithm (e.g., \citealt{Lacoste-Julien2015-ir}; \citealt{Garber2021-tg}).
Using techniques for implementing Carath\'eodory's theorem (\citealt{Beck2017-zs}; \citealt{Besancon2025-dh}), we can obtain a solution to problem \eqref{eq:birkhoff} as a convex combination of $d^2 + 1$ vertices, each of which represents a permutation $\hat y$.
Let $\bm{P}^* \in \mathcal{B}$ be an optimal solution to problem \eqref{eq:birkhoff},
$\hat\Ycal^* \subseteq \hat\Ycal$ the set of permutations with non-zero coefficients in the convex combination, and $\set{\pi_{\hat y}^*}_{\hat y \in \hat\Ycal^*}$ the non-zero coefficients,
i.e., $\bm{P}^* = \sum_{\hat y \in \hat\Ycal^*} \pi_{\hat y}^* \bm{P}_{\hat y}$.
Then, the decoding distribution $\pib(\thb)$ is obtained by setting $\pi_{\hat y}(\thb)$ to $\pi_{\hat y}^*$ for $\hat y \in \hat\Ycal^*$ and $0$ otherwise.
By the envelope theorem \citep[Lemma~12]{cao2025establishing}, the gradient required in~(i) can be written as $\nabla \Omega^*(\thb + \Lcal^{\rhb}\pib(\thb)) - \rhb(y)$, which we can compute as $\nabla \Omega^*(\thb + \bm{P}^*\bm{w}) - \rhb(y)$, where $\nabla \Omega^*(\thb + \bm{P}^*\bm{w}) = \argmax_{\mub \in \R^d} \inpr{\thb + \bm{P}^*\bm{w}, \mub} - \Omega(\mub)$ by Danskin's theorem.
Regarding~(ii), we can efficiently sample from $\pib(\thb)$ by drawing $\hat y \in \hat\Ycal^*$ following the distribution $\pib^*$ on $\hat\Ycal^*$.

While we have discussed the case of the ranking problem for concreteness, a similar approach works when a polyhedral representation of $\Vcal$, like the Birkhoff polytope, is available.

%
%
%

\section{Discussion on the Hinge Loss}\label[appendix]{asec:hinge-loss}
In \Citet{Van_der_Hoeven2020-ug} and \Citet{Van_der_Hoeven2021-wi}, the hinge loss for multiclass classification is introduced as a surrogate loss satisfying the self-bounding property (\cref{assump:self-bounding}).
However, we show that their hinge loss is discontinuous, hence non-convex, and that the standard hinge loss \citep{Crammer_undated-db} is not self-bounding.
Therefore, the hinge loss is not a valid surrogate loss for exploiting the surrogate gap. 

Consider online classification with $K = d$ classes.
For each round $t$, the hinge loss used in \Citet{Van_der_Hoeven2020-ug} and \Citet{Van_der_Hoeven2021-wi} is defined with parameter $\kappa \in [0, 1]$ as follows:
\[
  L_t(\W)
  =
  L(\W \x_t, y_t)
  =
  \begin{cases}
    \max\set*{1 - m_t(\W, y_t), 0} & \text{if $y^*_t \neq y_t$ or $m^*_t(\W) \le \kappa$}, \\
    0 & \text{if $y^*_t = y_t$ and $m^*_t(\W) > \kappa$}, \\
  \end{cases}
\]
where
\begin{align}
  &
  m_t(\W, y) = \inpr{\e^{y}, \W \x_t} - \max_{y' \neq y}\inpr{\e^{y'}, \W \x_t},
  \\
  &
  m^*_t(\W) = \max_{y \in \Ycal} m_t(\W, y),
  \qquad
  \text{and}
  \\
  &
  y^*_t \in \argmax_{y \in \Ycal}\inpr{\e^y, \W \x_t}.
\end{align}
Let $\tilde y_t \in \argmax_{y \neq y_t}\inpr{\e^y, \W \x_t}$.
A subgradient of $L_t$ at $\W$ is given by
\[
  \G_t(\W)
  =
  \begin{cases}
    (\e^{\tilde y_t} - \e^{y_t})\x_t^\top & \text{if $y^*_t \neq y_t$ or  $m^*_t(\W) \le \kappa$}, \\
    0 & \text{if $y^*_t = y_t$ and $m^*_t(\W) > \kappa$}, \\
  \end{cases}
\]
where $0$ means the all-zero matrix.

Below, we argue that the hinge loss with $\kappa < 1$ is discontinuous (though self-bounding) and that the hinge loss with $\kappa = 1$ is not self-bounding (though convex).

\paragraph{The case of $\kappa < 1$.}
Note that we have $m_t(\W, y_t) \le m^*_t(\W)$.
Thus, when $\kappa < 1$, $L_t(\W)$ viewed as a univariate function of $m_t(\W, y_t)$ is discontinuous, where $L_t(\W) = 1 - m_t(\W, y_t) \ge 1 - \kappa$ for $m_t(\W, y_t) \le \kappa$ and $L_t(\W) = 0$ for $m_t(\W, y_t) > \kappa$.
This implies that $L_t(\W)$ viewed as a function of $\W \in \Wcal$ is discontinuous at the region where $m_t(\W, y_t) = \kappa$ holds.
The parameter $\kappa$ is set to $1/K$ in \Citet{Van_der_Hoeven2020-ug} and to $1/2$ in \Citet{Van_der_Hoeven2021-wi}; therefore, the hinge loss used in their work is discontinuous, hence non-convex.

\paragraph{The case of $\kappa = 1$.}
The choice of $\kappa = 1$ corresponds to the standard convex hinge loss for multiclass classification.
However, this is not self-bounding.
Suppose $\norm{\x_t}_2 = 1$.
Then, if $y^*_t = y_t$ and $m^*_t(\W) \le \kappa = 1$, we have
$\norm{\G_t(\W)}_\mathrm{F}^2
=
2\norm{\x_t}_2^2 = 2$.
On the other hand, the loss value, $L_t(\W) = 1 - m^*_t(\W)$, can be arbitrarily close to zero as $m^*_t(\W)$ approaches $\kappa = 1$, preventing us from ensuring $\norm{\G_t(\W)}_\mathrm{F}^2 \lesssim L_t(\W)$.
Therefore, the hinge loss with $\kappa = 1$ is not self-bounding.

\section{Proof of Proposition~\ref{prop:dynamic-regret-ogd}}\label[appendix]{asec:odg-proof}

\begin{proof}
  By the standard analysis of OGD (e.g., \citealt[Theorem~2.13]{orabona2023modern}), for any $t$ and $\U_t \in \Wcal$, we have
  \[
    L_t(\W_t) - L_t(\U_t)
    \le
      \frac{\norm{\W_{t} - \U_t}_{\mathrm{F}}^2
      -
      \norm{\W_{t+1} - \U_t}_{\mathrm{F}}^2}{2\eta_t}
    +
    \frac{\eta_t}{2} \norm{\G_{t}}_{\mathrm{F}}^2.
  \]
  Summing over $t$, we obtain
  \[
    \sum_{t=1}^T
    \prn*{L_t(\W_t) - L_t(\U_t)}
    \le
    \underbrace{
      \sum_{t=1}^T
      \frac{\norm{\W_{t} - \U_t}_{\mathrm{F}}^2
      -
      \norm{\W_{t+1} - \U_t}_{\mathrm{F}}^2}{2\eta_t}
      }_\text{(A)}
    +
    \sum_{t=1}^T
    \frac{\eta_t}{2} \norm{\G_{t}}_{\mathrm{F}}^2.
  \]
  Below, we define $\U_{T+1} = \U_T$ for convenience.
  For $t=1,\dots,T$, it holds that
  \begin{align}
    \norm{\W_{t+1} - \U_t}_\mathrm{F}^2
    &=
    \norm{\W_{t+1} - \U_{t+1}}_\mathrm{F}^2
    +
    \norm{\U_{t+1} - \U_t}_\mathrm{F}^2
    +
    2\inpr{\W_{t+1} - \U_{t+1}, \U_{t+1} - \U_t}
    \\
    &\ge
    \norm{\W_{t+1} - \U_{t+1}}_\mathrm{F}^2
    +
    2\inpr{\W_{t+1} - \U_{t+1}, \U_{t+1} - \U_t}
    \tag*{Ignoring $\norm{\U_{t+1} - \U_t}_\mathrm{F}^2 \ge 0$}
    \\
    &\ge
    \norm{\W_{t+1} - \U_{t+1}}_\mathrm{F}^2
    -
    2D\norm{\U_{t+1} - \U_t}_\mathrm{F}.
    \tag*{Cauchy--Schwarz and $\norm{\W_{t+1} - \U_{t+1}}_\mathrm{F}\le D$}
  \end{align}
  Therefore, we have
  \[
    \text{(A)}
    \le
    \underbrace{
      \sum_{t=1}^T
      \frac{\norm{\W_{t} - \U_t}_{\mathrm{F}}^2
      -
      \norm{\W_{t+1} - \U_{t+1}}_\mathrm{F}^2
      }{2\eta_t}
    }_\text{(B)}
    +
    \underbrace{
      \sum_{t=1}^T
      \frac{D}{\eta_t}\norm{\U_{t+1} - \U_t}_\mathrm{F}
    }_\text{(C)}.
  \]
  Since $\eta_t$ is non-increasing, $\eta_t \ge \eta_T$ holds, and thus (C) is at most $\frac{D}{\eta_T}P_T$.
  We bound (B) from above as follows:
  \begin{align}
    \text{(B)}
    &=
    \frac{\norm{\W_1 - \U_1}_{\mathrm{F}}^2
      }{2\eta_1}
    +
    \frac{1}{2}\sum_{t=2}^T
    \prn*{\frac{1}{\eta_t} - \frac{1}{\eta_{t-1}}}
    \norm{\W_{t} - \U_t}_{\mathrm{F}}^2
    -
    \frac{\norm{\W_{T+1} - \U_{T+1}}_{\mathrm{F}}^2
      }{2\eta_T}
    \tag*{Summation by parts}
    \\
    &\le
    \frac{\norm{\W_1 - \U_1}_{\mathrm{F}}^2
      }{2\eta_1}
    +
    \frac{1}{2}\sum_{t=2}^T
    \prn*{\frac{1}{\eta_t} - \frac{1}{\eta_{t-1}}}
    \norm{\W_{t} - \U_t}_{\mathrm{F}}^2
    \tag*{Ignoring the last term}
    \\
    &\le
    \frac{D^2
      }{2\eta_1}
    +
    \frac{D^2}{2}\sum_{t=2}^T
    \prn*{\frac{1}{\eta_t} - \frac{1}{\eta_{t-1}}}
    \tag*{$\eta_{t-1} \ge \eta_t$ and $\norm{\W_{t} - \U_t}_{\mathrm{F}} \le D$}
    \\
    &=
    \frac{D^2
      }{2\eta_1}
    +
    \frac{D^2}{2}
    \prn*{\frac{1}{\eta_T} - \frac{1}{\eta_1}}
    \tag*{Telescoping}
    \\
    &=
    \frac{D^2}{2\eta_T}.
  \end{align}
  Therefore, we can bound (A) from above by $\frac{D}{\eta_T}\prn*{\frac{D}{2} + P_T}$, obtaining the desired bound.
\end{proof}

\section{Numerical Experiments}\label[appendix]{asec:experiments}

We experimentally evaluate the performance of our proposed Polyak-style learning rate (see \cref{sec:polyak}) in comparison to other learning rates.
Experiments were conducted on Google Colab with an AMD EPYC 7B12 CPU, 12 GB RAM, running Ubuntu 22.04.

\paragraph{Setup.}
We consider online $K$-class classification over $T = 10,000$ rounds with the logistic surrogate loss:
\[
L(\bm{W}; \x_t, y_t) = - \log_2 \prn*{\frac{\exp(\inpr{\mathbf{e}_{y_t}, \bm{W}\x_t})}{\sum_{y \in [K]} \exp(\inpr{\mathbf{e}_{y}, \bm{W}\x_t})}},
\]
where $\bm{W} \in \Wcal \subseteq \R^{K \times m}$ is a linear estimator, $y_t \in \set*{1,\dots,K}$ indicates a label, and $\x_t \in \R^m$ is an input feature vector.
This loss function satisfies the self-bounding property in \cref{assump:self-bounding} with $M = \frac{1}{\ln 2}$ \citep[Lemma~2]{Van_der_Hoeven2020-ug}.
As the evaluation metric, we report the cumulative 0-1 loss (number of mistakes).
Each configuration given below is repeated for 10 independent trials, and we plot the mean and the 95\% confidence intervals.

\paragraph{Datasets.}
We consider two types of datasets: synthetic datasets and MNIST datasets \citep{lecun1998mnist}.

\begin{itemize}
  \item \textbf{Synthetic datasets:} We set $K=2$ and $m=2$, and use a label set of $\set*{-1, +1}$, rather than $\set*{1, 2}$.
  Accordingly, we represent the linear estimator as a vector $\bm{w} \in \Wcal \subseteq \R^2$. Each feature vector $\x_t \in \R^m$ is an input feature vector drawn uniformly from the two-dimensional unit sphere. We define a reference vector $\bm{u} = \frac{1}{\sqrt{2}}(1, 1)^\top$. In the stationary environment, labels are given by $y_t = +1$ if $\inpr{\bm{u}, \x_t} \ge 0$ and $-1$ otherwise. Non-stationarity is simulated by introducing segment-wise label flips: the number of flips is set to $\{1,10,100\}$, and at each flip all labels are multiplied by $-1$. Consequently, the true estimator in each segment alternates between $\bm{u}$ and $-\bm{u}$.
  \item \textbf{MNIST datasets:} We set $K=3$ and $m=784$ (28$\times$28 images flattened into vectors). Each feature vector $\x_t \in \R^m$ is obtained by normalizing the pixel values of an image to have a unit $\ell_2$-norm. Similar to the synthetic datasets, we simulate non-stationarity by introducing segment-wise label flips with counts of $\{1,10,100\}$. For each segment, we uniformly sample three distinct digits. These three digits constitute the set of labels in that segment. Within each segment, we sample images of the three digits uniformly with replacement and use them as the input feature vectors $\x_t$.
\end{itemize}

\paragraph{Methods.}
We run OGD with projection onto $\Wcal$, which is an $\ell_2$-ball with a diameter of $D=20$.
Although the true estimator may lie in the unit ball, its exact scale is typically unknown a priori.
We thus adopt a larger diameter of $D=20$ to reflect this uncertainty.
The decoding method used in this section is that of \Citet{Van_der_Hoeven2021-wi}. This decoding approach, when applied to $K$-class classification, yields a surrogate gap of $\alpha = 1/K$.
We compare the following learning rates of OGD:\looseness=-1
\begin{itemize}[leftmargin=.7cm,itemsep=.2ex]
  \item \textbf{Constant}: a fixed value of $\eta_t = \frac{\alpha}{M}$, which satisfies \eqref{eq:eta-range} in \cref{thm:surrogate-gap}. This approach was adopted by \Citet{Van_der_Hoeven2020-ug} and \citet{Sakaue2024-mu}.
  \item \textbf{AdaGrad}: a standard adaptive learning rate of $\eta_t = \tfrac{D}{\sqrt{2 \sum_{s=1}^t \norm{\nabla L(\bm{w}_s; \x_s, y_s)}_2^2}}$ (see e.g., \citealt[Theorem~4.29]{orabona2023modern}).
This does not necessarily satisfy \eqref{eq:eta-range} in \cref{thm:surrogate-gap}.
  \item \textbf{Polyak-style}: our proposed learning rate inspired by Polyak's rule, which satisfies \eqref{eq:eta-range} in \cref{thm:surrogate-gap}.
\end{itemize}

\begin{figure}[t]
  \centering
  \begin{subfigure}{0.32\textwidth}
    \centering
    \includegraphics[width=.92\linewidth]{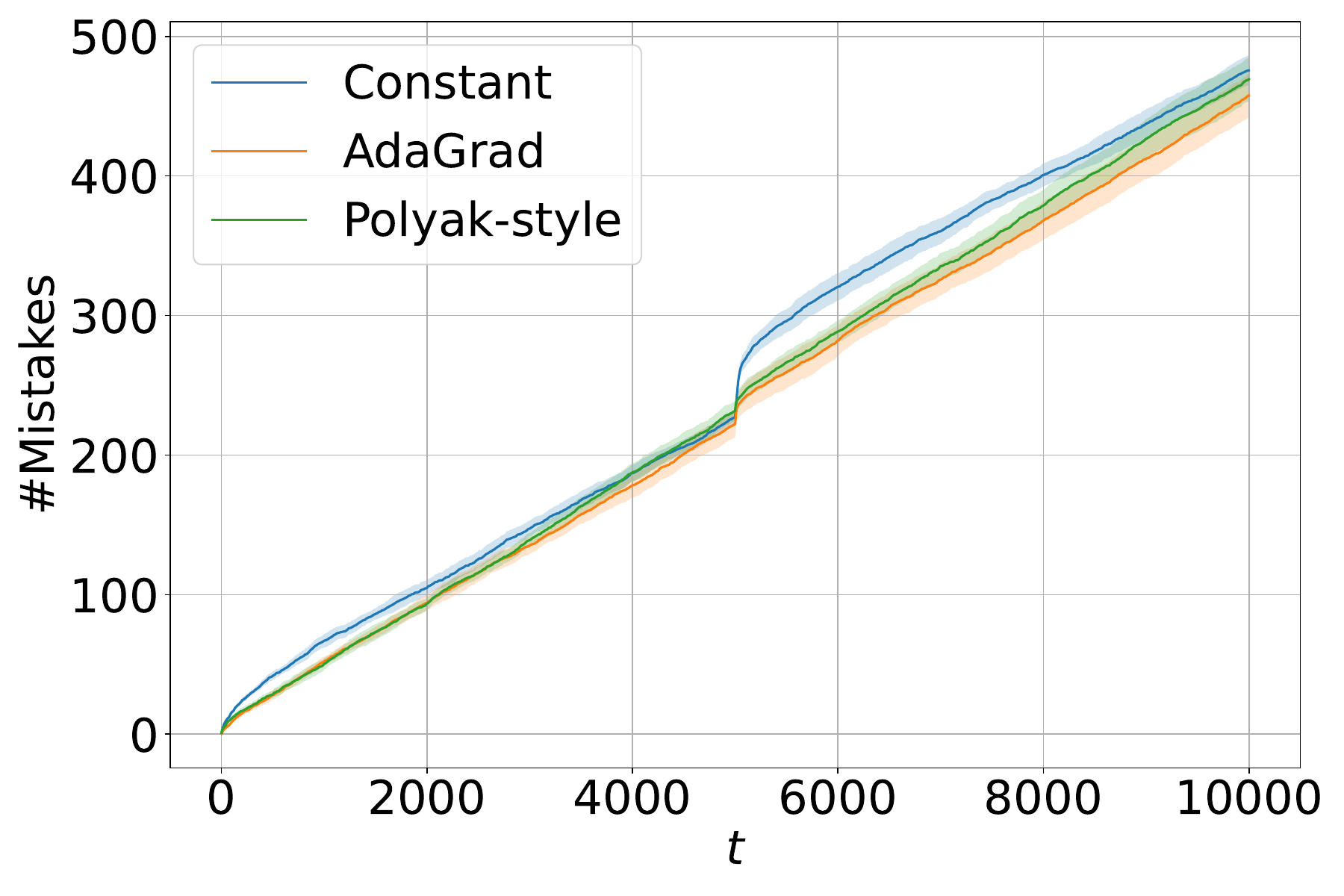}
    \caption{Synthetic, 1 flip}
    \label{fig:exp-flip1}
  \end{subfigure}
  \hfill
  \begin{subfigure}{0.32\textwidth}
    \centering
    \includegraphics[width=.92\linewidth]{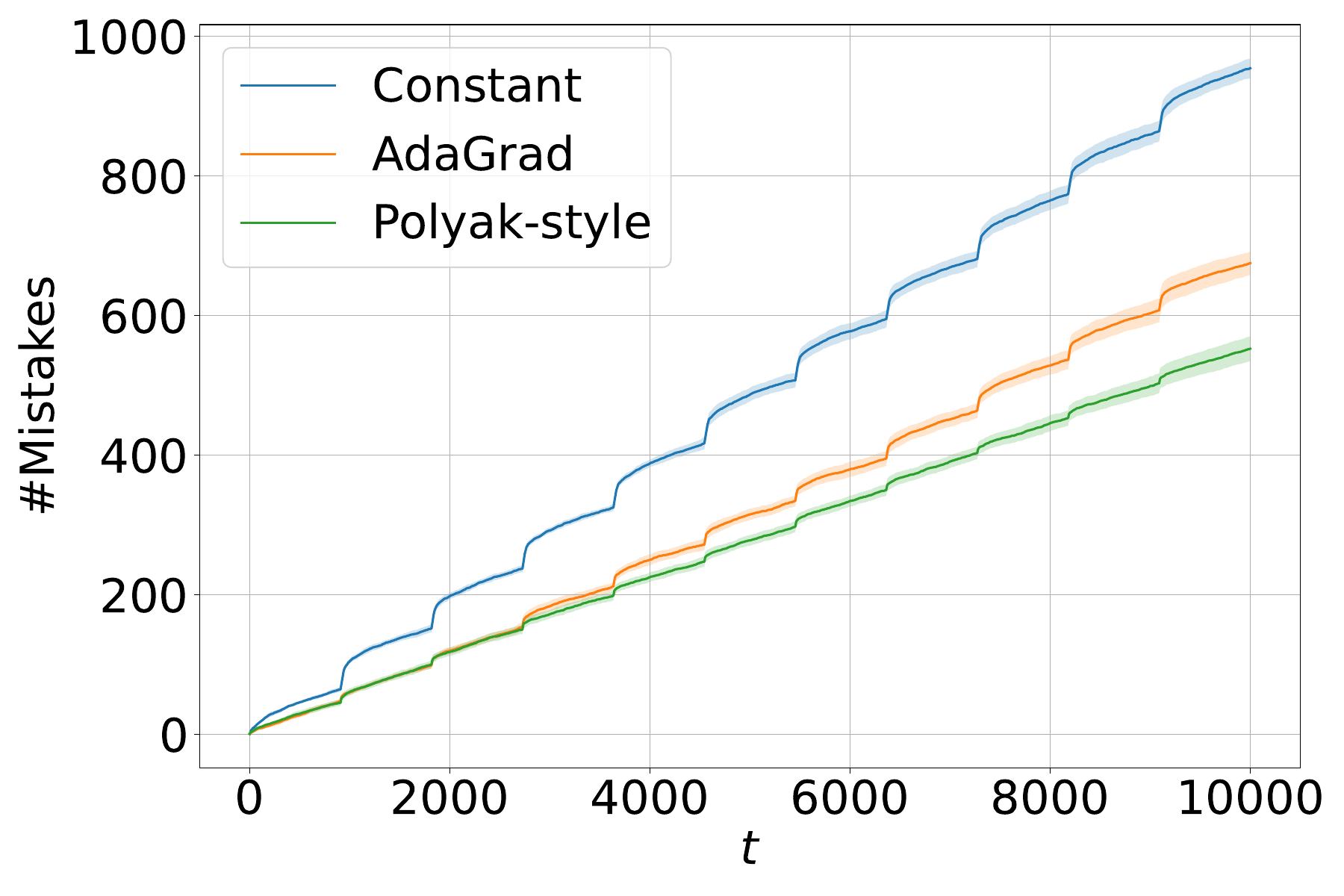}
    \caption{Synthetic, 10 flips}
    \label{fig:exp-flip10}
  \end{subfigure}
  \hfill
  \begin{subfigure}{0.32\textwidth}
    \centering
    \includegraphics[width=.92\linewidth]{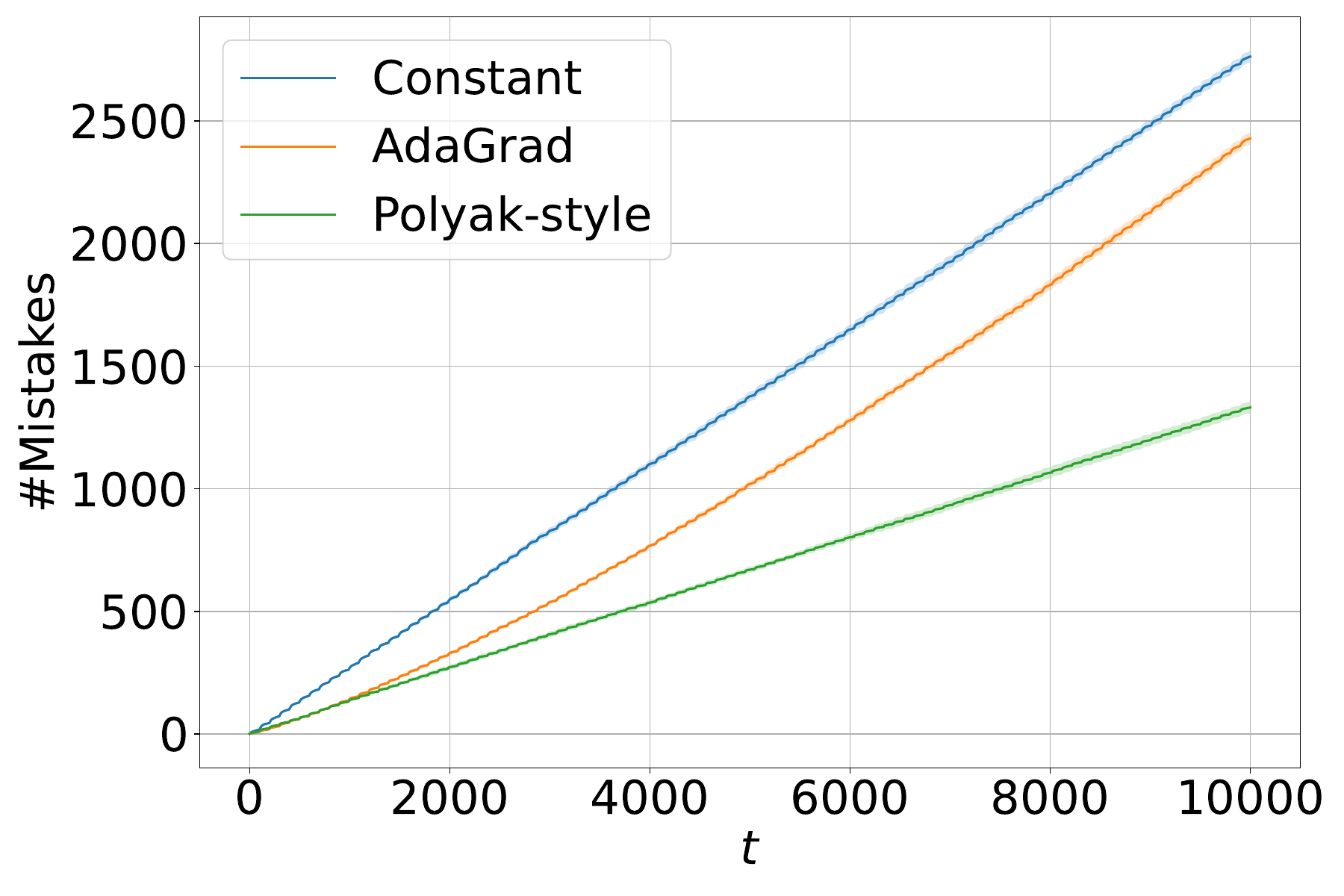}
    \caption{Synthetic, 100 flips}
    \label{fig:exp-flip100}
  \end{subfigure}
  \centering
  \begin{subfigure}{0.32\textwidth}
    \centering
    \includegraphics[width=.92\linewidth]{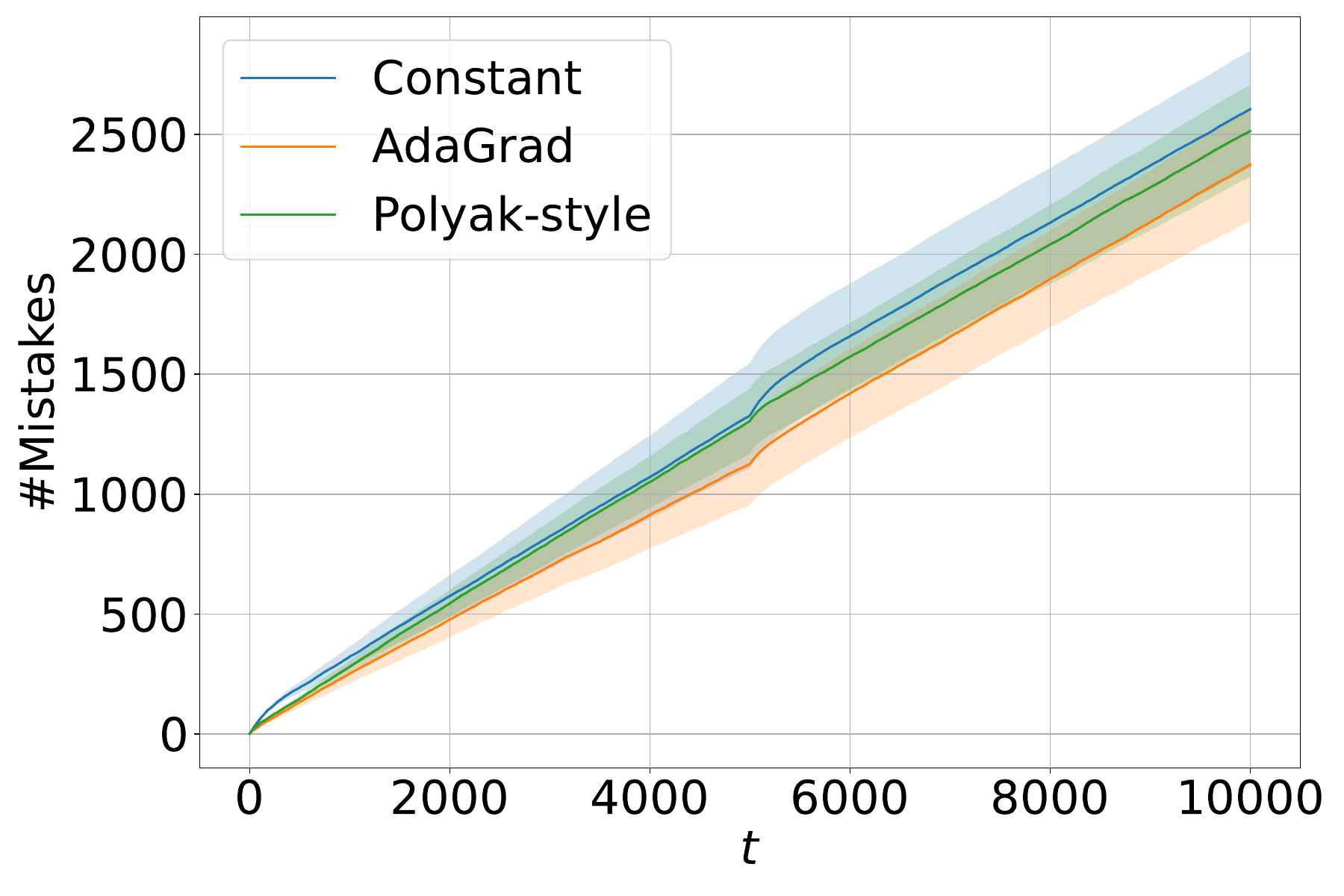}
    \caption{MNIST, 1 flip}
    \label{fig:exp-mnist-flip1}
  \end{subfigure}
  \hfill
  \begin{subfigure}{0.32\textwidth}
    \centering
    \includegraphics[width=.92\linewidth]{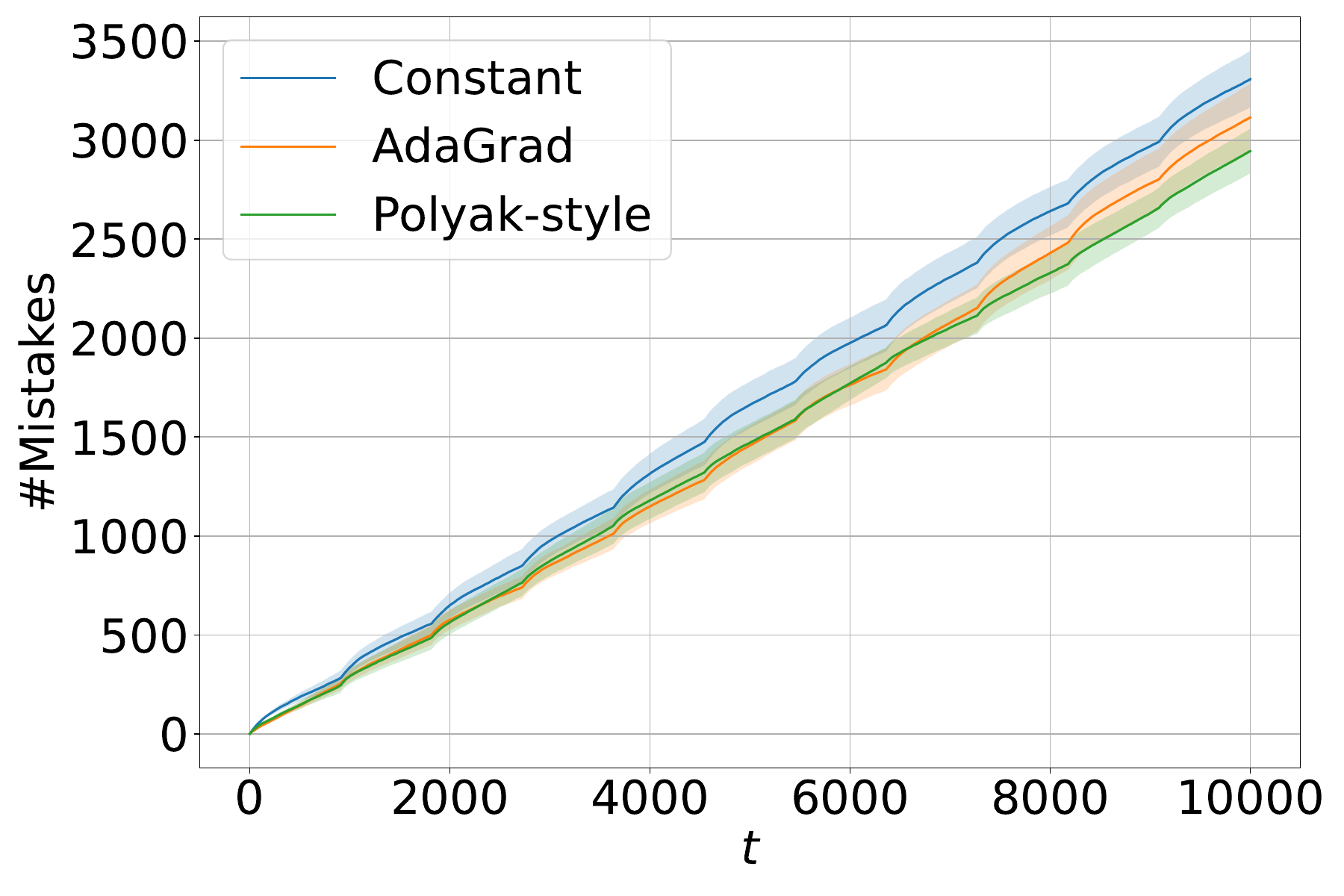}
    \caption{MNIST, 10 flips}
    \label{fig:exp-mnist-flip10}
  \end{subfigure}
  \hfill
  \begin{subfigure}{0.32\textwidth}
    \centering
    \includegraphics[width=.92\linewidth]{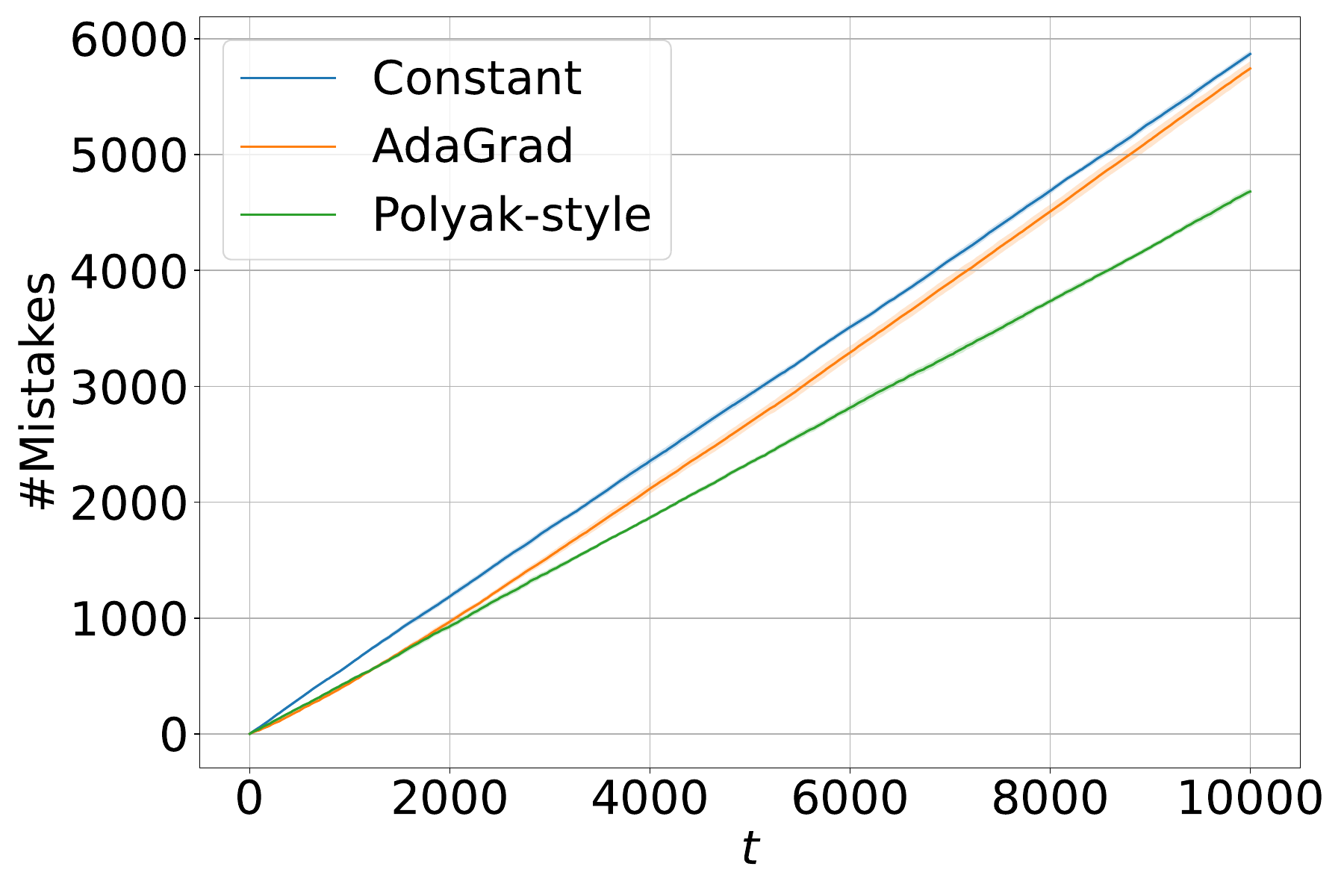}
    \caption{MNIST, 100 flips}
    \label{fig:exp-mnist-flip100}
  \end{subfigure}
  \caption{Experimental results on synthetic and MNIST datasets for different learning rates under varying numbers of label flips.}
  \label{fig:experiments}
\end{figure}

\paragraph{Results.}
\Cref{fig:experiments} shows the results for synthetic datasets and MNIST datasets.
The curves show the cumulative number of mistakes with 95\% confidence intervals over 10 independent trials.
For the ``1 flip'' setting (nearly stationary), ``Polyak-style'' performs slightly worse than AdaGrad, yet remains competitive.
For ``10 flips'' (moderately non-stationary), the constant learning rate performs worst, AdaGrad is intermediate, and Polyak-style outperforms them.
For ``100 flips'' (highly non-stationary), the performance gap between Polyak-style and the others becomes more pronounced.
In summary, the Polyak-style learning rate consistently achieves competitive or superior performance, with its advantage becoming more evident as the level of non-stationarity increases.

\section{Missing Proofs and Discussion in Section~\ref{sec:cfyloss}}\label[appendix]{asec:cfyl-proof-discussion}

\subsection{Proof of Lemma~\ref{lem:target-surrogate}}\label[appendix]{asec:lem-target-surrogate-proof}

\begin{proof}
  As shown in \citet[Lemma~8]{cao2025establishing}, the conjugacy of addition and infimal convolution implies $\Omega_\tau^*(\thb) = (\Omega + \tau)^*(\thb) = \inf\Set*{\Omega^*(\thb - \thb') +
  \tau^*(\thb')}{\thb'\in\R^d} = \Omega^*(\thb + \Lcal^{\rhb} \pib(\thb))$.
  Thus, we have
  \begin{align}
    L_{\Omega_\tau}(\thb, y)
    &=
    \Omega^*(\thb + \Lcal^{\rhb} \pib(\thb)) + \Omega(\rhb(y)) + \tau(\rhb(y)) - \inpr{\thb, \rhb(y)}
    \\
    &=
    \Omega^*(\thb + \Lcal^{\rhb} \pib(\thb)) + \Omega(\rhb(y)) - \inpr{\thb + \Lcal^{\rhb} \pib(\thb), \rhb(y)} + \inpr{\Lcal^{\rhb} \pib(\thb), \rhb(y)} + \tau(\rhb(y))
    \\
    &=
    L_\Omega(\thb + \Lcal^{\rhb} \pib(\thb), y) + \inpr{\Lcal^{\rhb} \pib(\thb), \rhb(y)} + \tau(\rhb(y)),
  \end{align}
  where the last equality comes from the definition of the Fenchel--Young loss with the regularizer $\Omega$.
  Therefore, it remains to show
  $\inpr{\Lcal^{\rhb} \pib(\thb), \rhb(y)} + \tau(\rhb(y)) = \E_{\hat y \sim \pib(\thb)}[\ell(\hat y, y)]$.
  From $\min_{\hat y \in \hat\Ycal} \ell(\hat y, y) = 0$ in \cref{assump:basic} and
  $-\min_{\hat y \in \hat\Ycal} \inpr{\rhb(y), \elb^{\rhb}(\hat y)} = \tau(\rhb(y))$ in \cref{def:cfyloss}, we have
  \[
    \min_{\hat y \in \hat\Ycal} \ell(\hat y, y)
    =
    \min_{\hat y \in \hat\Ycal} \inpr{\rhb(y), \elb^{\rhb}(\hat y)} + c(y)
    =
    -\tau(\rhb(y)) + c(y)
    =0,
  \]
  hence $\tau(\rhb(y)) = c(y)$.
  Therefore, we obtain
  \[
    \inpr{\Lcal^{\rhb} \pib(\thb), \rhb(y)} + \tau(\rhb(y))
  =
  \inpr{\Lcal^{\rhb} \pib(\thb), \rhb(y)} + c(y)
  =
  \sum_{\hat y \in \hat\Ycal}
  \pi_{\hat y}(\thb)
  \prn*{
    \inpr{\rhb(y), \elb^{\rhb}(\hat y)} + c(y)
  }
  = \E_{\hat y \sim \pib(\thb)}[\ell(\hat y, y)],
  \]
  as desired.
\end{proof}

\subsection[Discussion on the Connection to Cao et al. (2025)]{Discussion on the Connection to \citet{cao2025establishing}}\label[appendix]{asec:cfyl-connection}
We discuss the connection between \cref{lem:target-surrogate} and the excess risk analysis of \citet{cao2025establishing}.
For a distribution $\bm{\eta} \in \triangle^K$, write $\bar{\rhb}_{\bm{\eta}} = \E_{y \sim \bm{\eta}}[\rhb(y)]$.
We also use the Fenchel--Young-loss expression
\[
  L_\Omega(\bm{\xi}, \mub)
  =
  \Omega^*(\bm{\xi}) + \Omega(\mub) - \inpr{\bm{\xi}, \mub}
  \qquad
  (\mub \in \Ccal),
\]
which extends the notation $L_\Omega(\bm{\xi}, y)$ by replacing $\rhb(y)$ with $\mub$.
In the proof of \citet[Lemma~14]{cao2025establishing},
for any $\bm{\eta} \in \triangle^K$ and $\thb \in \R^d$, they show
\begin{equation}
  \E_{y \sim \bm{\eta}} [L_{\Omega_\tau}(\thb, y)]
  -
  \inf_{\thb' \in \R^d}
  \E_{y \sim \bm{\eta}} [L_{\Omega_\tau}(\thb', y)]
  =
  L_\Omega(\thb + \Lcal^{\rhb} \pib(\thb), \bar{\rhb}_{\bm{\eta}})
  +
  \E_{\hat y \sim \pib(\thb)}\brc*{\E_{y \sim \bm{\eta}} [\ell(\hat y, y)] }
  -
  \min_{\hat y \in \hat\Ycal}
  \E_{y \sim \bm{\eta}} [\ell(\hat y, y)].
\end{equation}
The proof of this equality relies on $\Omega_\tau^*(\thb) = \Omega^*(\thb + \Lcal^{\rhb} \pib(\thb))$ \citep[Lemma~8]{cao2025establishing}, which is also the key identity in our proof of \cref{lem:target-surrogate}.
By dropping the nonnegative term $L_\Omega(\thb + \Lcal^{\rhb} \pib(\thb), \bar{\rhb}_{\bm{\eta}}) \ge 0$, the authors obtained
\[
  \E_{\hat y \sim \pib(\thb)}\brc*{\E_{y \sim \bm{\eta}} [\ell(\hat y, y)] }
  -
  \min_{\hat y \in \hat\Ycal}
  \E_{y \sim \bm{\eta}} [\ell(\hat y, y)]
  \le
  \E_{y \sim \bm{\eta}} [L_{\Omega_\tau}(\thb, y)]
  -
  \inf_{\thb' \in \R^d}
  \E_{y \sim \bm{\eta}} [L_{\Omega_\tau}(\thb', y)],
\]
that is, a linear upper bound on the target excess risk in terms of the surrogate excess risk.
At this point, the proof of our \cref{lem:target-surrogate} deviates from their original proof.
First, instead of dropping this nonnegative term, we retain its specialization at a ground-truth label, which plays a crucial role in our analysis through \cref{lem:target-surrogate-lower-bound}.
Second, we are interested in evaluating the target and surrogate losses on a ground-truth label, rather than on a distribution~$\bm{\eta}$.
Therefore, we can simplify the expression by focusing on the case of $\bm{\eta} = \e^y$ for ground-truth~$y \in \Ycal$, where $\bar{\rhb}_{\bm{\eta}} = \rhb(y)$.
This implies
$\inf_{\thb' \in \R^d} \E_{y \sim \bm{\eta}} [L_{\Omega_\tau}(\thb', y)] = 0$
and
$\min_{\hat y \in \hat\Ycal} \E_{y \sim \bm{\eta}} [\ell(\hat y, y)] = 0$
under \cref{assump:basic}.
Consequently, we obtain the target--surrogate relation in \cref{lem:target-surrogate}:
\[
  L_{\Omega_\tau}(\thb, y)
  =
  L_\Omega(\thb + \Lcal^{\rhb} \pib(\thb), y)
  +
  \E_{\hat y \sim \pib(\thb)}\brc*{\ell(\hat y, y)}.
\]
Thus, while our \cref{lem:target-surrogate} is built on an analysis similar to that of \citet{cao2025establishing}, it is tailored for our purpose of deriving the ``small-surrogate-loss + path-length'' bound.

\subsection{Proof of Lemma~\ref{lem:target-surrogate-lower-bound}}\label[appendix]{asec:lem-target-surrogate-lower-bound-proof}

\begin{proof}
  From $\dom(\Omega) = \Ccal$ and Danskin's theorem \citep{Danskin1966-gb},\footnote{We assumed $\dom(\Omega) = \Ccal$ in \cref{assump:omega} solely for ease of establishing this relation.}
  we have
  \[
    \nabla \Omega^*(\thb + \Lcal^{\rhb} \pib(\thb))
    =
    \argmax_{\mub \in \Ccal}
    \set*{\inpr{\thb + \Lcal^{\rhb} \pib(\thb), \mub} - \Omega(\mub)}.
  \]
  Since $\Omega$ is $\lambda$-strongly convex over $\Ccal = \conv\prn*{ \Set*{\rhb(y) \in \R^d}{y \in \Ycal} }$, the quadratic lower bound on the (standard) Fenchel--Young loss \citep[Proposition~3]{Blondel2022-uo} implies
  \[
    L_\Omega(\thb + \Lcal^{\rhb} \pib(\thb), y)
    \ge
    \frac{\lambda}{2} \norm{\nabla \Omega^*(\thb + \Lcal^{\rhb} \pib(\thb)) - \rhb(y)}_2^2.
  \]
  Furthermore, the envelope theorem \citep[Lemma~12]{cao2025establishing} implies
  \[
    \nabla L_{\Omega_\tau}(\thb, y)
    =
    \nabla \Omega^*(\thb + \Lcal^{\rhb} \pib(\thb)) - \rhb(y).
  \]
  Substituting this into the above quadratic lower bound yields the desired inequality.
\end{proof}

\section{Proof of the Lower Bound}\label[appendix]{asec:lower-bound}

\begin{proof}[Proof of \cref{thm:lower-bound}]
  Fix nonnegative integers $T_{\mathsf F},T_{\mathsf P}$ with $T=T_{\mathsf F}+T_{\mathsf P}>0$.
  Let $K = d = 2$, $\Ycal = \hat\Ycal = \set{-1, +1}$, and $\x_t = (1, 0)^\top$ for $t = 1,\dots,T$.
  Set each ground-truth label $y_t$ to $-1$ or $+1$ independently and uniformly.
  Then, any deterministic learner inevitably incurs $\E\brc*{Z_T} = T/2$, where $Z_T = \sum_{t=1}^T\ell(\hat y_t, y_t) = \sum_{t=1}^T \ind_{\hat y_t \neq y_t}$.
  By Yao's principle~\citep{Yao1977-ev}, for any possibly randomized learner, there exists a worst-case instance such that $\E\brc*{Z_T} \ge T/2$, where the expectation is taken over the learner's possible randomness.
  Below, we let $y_1,\dots,y_T$ be the labels of such an instance.\looseness=-1

  For the above instance, we design comparator sequences to show the claim.
  Let the domain be
  \[
    \Wcal = \Set*{\W \in \R^{2 \times 2}}{\norm{\W}_\mathrm{F} \le 1},
  \]
  and let $\U_1,\dots,\U_T \in \Wcal$ denote comparators.
  For convenience, let $\zeta_t = y_t \inpr{(+1, -1)^\top, \U_t\x_t}$.
  Then, the smooth hinge loss (see \cref{asec:smooth-hinge-loss}) satisfies
  \[
    L_t(\U_t)
    =
    \begin{cases}
      1 - 2\zeta_t &\text{ if } \zeta_t \le 0, \\
      (1 - \zeta_t)^2 &\text{ if } 0 < \zeta_t < 1, \\
      0 &\text{ if } 1 \le \zeta_t.
    \end{cases}
  \]
  We use two elementary comparators,
  \[
    \U^+ =
    \frac{1}{2}
      \begin{bmatrix}
        +1 & +1 \\
        -1 & +1
      \end{bmatrix}
    \quad\text{and}\quad
    \U(y) =
    \frac{1}{2}
      \begin{bmatrix}
        +y & +1 \\
        -y & +1
      \end{bmatrix}
      \quad (y \in \set{-1,+1}).
  \]
  The comparator sequence is defined using the fixed comparator $\U^+$ in the first $T_{\mathsf F}$ rounds and the label-fitting comparator $\U(y_t)$ in the remaining $T_{\mathsf P}$ rounds:
  \begin{align}
    \U_t =
    \begin{cases}
      \U^+ & \text{if } t \le T_{\mathsf F}, \\
      \U(y_t) & \text{if } t > T_{\mathsf F}.
    \end{cases}
  \end{align}
  For $t \le T_{\mathsf F}$, we have $\zeta_t = y_t$, and hence $L_t(\U_t)$ is either $0$ or $3$.
  For $t > T_{\mathsf F}$, we have $\zeta_t = 1$, and hence $L_t(\U_t)=0$.
  Therefore, we have
  \[
    F_T = \sum_{t=1}^T L_t(\U_t) \le 3T_{\mathsf F}.
  \]
  As for the path length, the comparator is constant in the first $T_{\mathsf F}$ rounds, and every subsequent change has Frobenius norm at most $\sqrt{2}$.
  There are at most $T_{\mathsf P}$ such changes, including the possible transition from the fixed-comparator block to the label-fitting block.
  Thus, it holds that
  \[
    P_T = \sum_{t=2}^T \norm{\U_t-\U_{t-1}}_\mathrm{F} \le \sqrt{2}T_{\mathsf P}.
  \]
  Combining these bounds with $\E\brc*{Z_T} \ge T/2$, we obtain
  \[
    \E\brc*{Z_T}
    \ge \frac{T_{\mathsf F}+T_{\mathsf P}}{2}
    \ge \frac{F_T+P_T}{6},
  \]
  where the last inequality uses $F_T+P_T\le 3T_{\mathsf F}+\sqrt{2}T_{\mathsf P}\le 3(T_{\mathsf F}+T_{\mathsf P})$.
  This proves the claimed $\Omega(F_T+P_T)$ lower bound on the expected cumulative target loss.
  \pagebreak
\end{proof}
\end{document}